\documentclass{article}

\usepackage{arxiv}

\usepackage[utf8]{inputenc}
\usepackage[T1]{fontenc}

\usepackage{amsfonts}       
\usepackage[numbers, compress]{natbib}

\usepackage{listings}
\usepackage[most]{tcolorbox}
\tcbuselibrary{listings,breakable}

\usepackage{microtype}
\usepackage{graphicx}
\usepackage{subfigure}

\usepackage{hyperref}       
\usepackage{url}            
\usepackage{nicefrac} 

\usepackage{tabularx}
\usepackage{dsfont}
\usepackage{multirow}
\usepackage{wrapfig}

\usepackage{multirow}
\usepackage{xcolor}
\usepackage{lipsum,booktabs}
\usepackage[dvipsnames,table,xcdraw]{xcolor}

\usepackage{multicol}

\usepackage{caption}
\usepackage{subcaption}

\usepackage{enumitem}

\usepackage{upgreek}

\usepackage{array}
\usepackage{wrapfig}
\newcolumntype{C}[1]{>{\centering\arraybackslash}p{#1}}

\usepackage{amsmath,amsfonts,bm}

\def\eqref#1{equation~\ref{#1}}

\def\1{\bm{1}}

\DeclareMathAlphabet{\mathsfit}{\encodingdefault}{\sfdefault}{m}{sl}
\SetMathAlphabet{\mathsfit}{bold}{\encodingdefault}{\sfdefault}{bx}{n}

\usepackage{amsmath}
\usepackage{amsthm}
\usepackage{thmtools}
\usepackage{amssymb}
\usepackage{mathtools}
\usepackage{bbding}

\usepackage{makecell}  
\usepackage{adjustbox} 

\definecolor{brightmaroon}{rgb}{0.76, 0.13, 0.28}
\definecolor{darkcyan}{rgb}{0.0, 0.55, 0.55}

\definecolor{best}{HTML}{1F77B4}    
\definecolor{second}{HTML}{FF0000}  

\usepackage[capitalize,noabbrev]{cleveref}

\theoremstyle{plain}

\theoremstyle{definition}

\theoremstyle{remark}

\usepackage[textsize=tiny]{todonotes}

\title{Weight-Adjusted Gradients Reveal Parameter Importance and Failure Modes in LLMs}

\usepackage{authblk}

\author[1]{Shrestha Datta}
\author[2]{Hongfu Liu}
\author[1*]{Anshuman Chhabra}

\affil[1]{University of South Florida, Tampa, FL, USA}
\affil[2]{Brandeis University, Waltham, MA, USA}
\affil[*]{Corresponding Author}
\affil[ ]{\texttt{\{shresthadatta, anshumanc\}@usf.edu, hongfuliu@brandeis.edu}}

\renewcommand{\headeright}{}
\renewcommand{\undertitle}{}
\renewcommand{\shorttitle}{Weight-Adjusted Gradients Reveal Parameter Importance and Failure Modes in LLMs}

\date{}

\begin{document}

\maketitle

\begin{abstract}
    Understanding which parameters are influential in Large Language Models (LLMs) is central to improving their efficiency, reliability, and interpretability. We introduce Weight-Adjusted Gradients (WAG), a simple yet effective approach for estimating parameter importance that explicitly captures the interaction between model weights and first-order gradient information and identifies parameters that disproportionately influence model behavior, such as those responsible for collapse phenomena in LLMs. Across a range of models and settings, we show that WAG surfaces a tiny but critical subset of parameters whose modification leads to dramatic degradation in performance, a failure mode that existing importance metrics overlook. These findings reveal a previously underexplored interplay between weights and gradients, suggesting that parameter importance cannot be fully understood through either signal alone. The surprising effectiveness of WAG points to fundamental structural properties of trained networks and motivates new open questions about the role of zeroth-order and first-order information in deep learning. We demonstrate the practical utility of WAG across multiple applications, including expert allocation in mixture-of-expert architectures, parameter-specific unlearning, mixed-precision quantization, and layer selection for knowledge editing. Our results position WAG as a unified approach for analyzing, debugging, and controlling LLMs, and opens new directions for principled model-level interpretation.
\end{abstract}

\section{Introduction}

Large Language Models (LLMs) have demonstrated strong performance across a wide range of tasks, becoming a central component in modern machine learning systems~\citep{brown2020language, touvron2023llama, openai2023gpt4}. While much of the progress in this area has focused on scaling model size, improving training procedures, and designing new architectures \cite{kaplan2020scaling, hoffmann2022training, minaee2024large}, comparatively less attention has been given to understanding the internal structure of trained models \cite{ferrando2024primer}. In particular, identifying which parameters are most influential to model behavior remains an important yet underexplored problem. Such understanding is critical for improving model reliability, enabling efficient adaptation, and supporting interpretability.

\looseness-1 A natural way to study this problem is through \textit{parameter importance} estimation. Intuitively, the importance of a parameter can be measured by evaluating how sensitive the model’s behavior is to perturbations of that parameter. Existing approaches typically rely on either parameter magnitudes or gradient-based signals. Magnitude-based methods, such as those used in pruning, assume that parameters with larger absolute values are more important~\citep{han2015learning, gale2019state}. Gradient-based approaches instead measure sensitivity through first-order information, for example using saliency or influence approximations~\citep{lecun1990optimal, molchanov2017pruning, vitel2025first}. Second-order methods further incorporate curvature information, but often incur high computational cost and rely on approximations that may not scale well to modern LLMs~\citep{hassibi1993second, frantar2022gptq, askari2026layerif}. Despite these efforts, the underlying relationship between weight and gradient information remains poorly understood, and existing methods offer little insight into how this interaction relates to severe failure modes such as extreme performance degradation. Specifically, existing methods often fail to consistently identify parameters that induce significant performance degradation when perturbed, particularly in large-scale models.

\looseness-1 In this paper, we examine parameter importance from the perspective of the interaction between parameter values and gradient signals. We observe that treating these two sources of information independently can overlook parameters that are both large in magnitude and highly sensitive to optimization dynamics. Motivated by this, we propose \textit{Weight-Adjusted Gradients (\textsc{WAG})}, a simple metric that combines zeroth-order and first-order information to estimate parameter importance. The formulation is straightforward and can be computed efficiently during standard training or evaluation without requiring second-order information or retraining.

\looseness-1 Our analysis shows that \textsc{WAG} identifies a tiny subset of parameters that have a disproportionate impact on model performance. In particular, perturbing these parameters leads to significant degradation, including failure modes such as abrupt performance collapse. We find that many of these parameters are not captured by existing magnitude-based or gradient-based metrics, suggesting that important aspects of model behavior are missed when considering these signals in isolation. These results highlight a previously underexplored relationship between weights and gradients in trained networks, leading to several open questions.

\looseness-1 Beyond analysis, we demonstrate that \textsc{WAG} is useful in several practical settings. We apply \textsc{WAG} to expert allocation in Mixture-of-Experts (MoE) architectures and mixed-precision quantization, where it guides how many experts each layer receives and which submodules to keep in full precision. We also explore its use in targeted unlearning and model editing, where identifying a tiny set of influential parameters is desirable. Across these applications, WAG provides a simple and computationally efficient tool for analyzing and controlling LLMs.

\looseness-1\textbf{Contributions}. We summarize our main contributions as follows:\vspace{-1.25mm}
\begin{itemize}[wide=10pt, leftmargin=*, nosep]
\item \looseness-1 We propose \textit{Weight-Adjusted Gradients (\textsc{WAG})} which leverages the unexplored relation between weight and gradient to estimate the parameter influence and identifying parameters, masking which causes significant performance degradation. Without requiring expensive computation, it captures the sensitivity of loss to multiplicative parameter perturbations.
\vspace{1mm}
\item We empirically show that WAG identifies a tiny subset of parameters whose perturbation leads to significant performance degradation and failure modes, which are overlooked by existing studies.
\vspace{1mm}
\item Additionally, we provide theoretical interpretations of the relation between weight and gradient captured by \textsc{WAG} and the impact on performance degradation behavior introduced by \textsc{WAG}.
\vspace{1mm}
\item Finally, we demonstrate the effectiveness and applicability of WAG with competitive performance in diverse practical applications, including expert allocation in MoE, targeted unlearning, mixed-precision quantization, and knowledge editing.
\end{itemize}

\section{Related Works}

In this section, we review the literature on three aspects. In Subsection \ref{sec:general_parameter_importance}, we review how parameter importance has been quantified in general. In Subsection \ref{sec:failure_modes}, we explore the failure modes in LLMs, specifically model collapse, and observe that existing works do not examine the collapse nature induced by masking the very parameters deemed important. We review how the general parameter importance measures of Subsection \ref{sec:general_parameter_importance} are operationalized as parameter influence estimates in downstream applications in Subsection \ref{sec:literature_parameter_influence}.

\subsection{General Parameter Importance}\label{sec:general_parameter_importance}
Historically, numerous studies have investigated how to quantify parameter importance. Existing approaches primarily characterize parameter importance from three perspectives: (1) magnitude-based importance, (2) gradient-based sensitivity, and (3) curvature-based measures. The most naive magnitude-based method is utilizing parameter values themselves, under the intuition that larger weights exert a greater direct influence on the transformation of input signals and therefore may be more important \cite{hanson1988comparing, gale2019state}. To obtain a more informative estimate of importance, subsequent works incorporate activation magnitudes, which capture the effective contribution of a parameter to the downstream computation for a given input \cite{han2015learning, sun2024simple}. Beyond magnitude-based criteria, parameter importance has also been estimated through the sensitivity of the loss function. For example, approximation of the change in loss resulting from parameter removal using a first-order Taylor expansion \cite{molchanov2017pruning}. Curvature-based approaches further account for second-order information. Optimal Brain Damage estimates the increase in loss caused by removing a parameter through a second-order Taylor expansion combined with a diagonal approximation of the Hessian \cite{lecun1990optimal}. In contrast, Optimal Brain Surgeon provides a more accurate second-order estimate by utilizing the full Hessian without imposing a diagonal approximation \cite{hassibi1993second}. More recently, Optimal Brain Apoptosis extends this line of work by introducing a computationally more efficient framework while similarly avoiding the diagonal Hessian assumption \cite{sun2025optimal}. Recent quantization methods such as GPTQ \cite{frantar2022gptq} implicitly encode parameter importance through a second-order quadratic optimization objective. In these methods, the importance of a parameter is reflected by the extent to which perturbations of that parameter affect the loss function under the local curvature of the optimization landscape.

\subsection{Identifying Failure Modes in LLMs}\label{sec:failure_modes}

Failure due to model collapse in LLMs broadly manifests in two forms: collapse arising from corrupted training data, and collapse arising from architectural modification. Recent work has characterized the phenomenon of model collapse arising from recursive training on synthetic data, where it was demonstrated that generative models trained on data produced by previous model generations undergo a degenerative process whereby the tails of the original data distribution progressively disappear \cite{shumailov2023curse}. Critically, it was shown that access to original human-generated data is essential to prevent collapse, as the recursive use of synthetic data causes models to converge toward a point estimate with vanishing variance. Building on this empirical foundation, recent work has provided a formal proof of the inevitability of LM collapse for autoregressive language models \cite{wang2024language}, showing that recursive training on self-generated text introduces non-negative errors that accumulate across generations, leading to irreversible deviation from the original data distribution and eventual degradation in both fluency and downstream task performance.

At the architectural level, a recent study \cite{shi2026understanding} identified that collapse under layer pruning is a threshold phenomenon: performance remains stable until pruning encroaches on the \textit{Silent Phase}, the early-to-middle layers responsible for building representational scaffolding, beyond which collapse is abrupt and irreversible even under supervised fine-tuning. This threshold varies across architectures and can be remarkably low. At finer granularity, certain \textit{bottleneck matrices} scattered irregularly across the network resist any compression, with their removal triggering immediate collapse regardless of redundancy elsewhere, implying that the collapse threshold is determined not by total parameter count but by a sparse, irregularly distributed set of critical parameters \cite{reda2025many}. However, none of these works characterizes collapse at the granularity of a tiny number of parameters, nor do they identify the minimum number of scattered weights whose targeted removal suffices to induce collapse.

\subsection{Applications of Estimating Parameter Influence} \label{sec:literature_parameter_influence}

The parameter importance estimates are widely operationalized in pruning methods, which remove parameters or structured groups based on their estimated contribution to model performance. Recent approaches for LLMs include 
magnitude-based methods like Wanda \cite{sun2024simple} which leverage activation-scaled magnitude scores to directly rank weights for removal. On the other hand, magnitude alone also serves as importance metric where lower magnitude parameters are pruned \cite{han2015learning}. More recent work further explores gradient-based criteria and mask refinement strategies to improve pruning decisions under limited calibration data \cite{das2023beyond, yang2025wanda++, zimmer2025sparseswaps}. Recent work has also shown that Wanda pruning can improve test-time reasoning performance in LLMs, even in comparison to the base unpruned model \cite{monjur2026revisiting}.

These principles have been extended to layer-wise sparsity allocation, where methods estimate the relative contribution of transformer blocks to guide non-uniform pruning ratios across layers. AlphaPruning \cite{lu2024alphapruning} allocates sparsity using spectral characteristics of layer weights to identify well-conditioned layers, while OWL \cite{yin2023outlier} leverages activation outlier statistics to estimate layer sensitivity and preserve critical transformer blocks. On the other hand, LayerIF \cite{askari2026layerif} takes a data-driven approach, computing layer-wise influence scores by isolating layer gradients and Hessians while keeping the validation and training data constant. 

Beyond pruning and sparsity allocation, influence-based \cite{chhabra2025oga} layer quality estimates have been applied to continual learning, where the central challenge is preventing catastrophic forgetting of prior tasks when learning new ones. Regularization-based methods, Elastic Weight Consolidation (EWC) \cite{kirkpatrick2017overcoming}, Synaptic Intelligence (SI) \cite{zenke2017continual}, and Memory Aware Synapses (MAS) \cite{aljundi2018memory} address this by penalizing changes to parameters deemed important for previous tasks. Recent work \cite{gao2025defying} provides a unified theoretical account of this family. It formally defines parameter importance via influence functions, showing that EWC, SI, and MAS correspond to first- or second-order instantiations of a single influence-based framework. Their proposed method, RIF, derives importance scores by measuring how small perturbations to the parameter distribution affect task loss and demonstrates consistently lower forgetting rates.

Influence-based layer quality estimation has also been applied to expert allocation in Mixture-of-Experts (MoE) architectures. Uniform allocation of LoRA experts across transformer layers has been questioned by recent work showing that expert redundancy and routing overfitting degrade performance~\cite{gao2025mola}. AlphaLoRA~\cite{qing2024alphalora} addresses this using spectral layer quality metrics to inform allocation. LayerIF~\cite{askari2026layerif} instead derives task-specific allocation budgets from layer-wise influence scores, dynamically varying expert counts across layers.

Parameter influence is also used in the task of mixed-precision quantization, where parameter groups with most saliency are preserved in high precision and the others in low precision. Recent studies, PB-LLM \cite{shang2024pbllm} and SliM-LLM \cite{huang2025slimllm}, employ magnitude- and Hessian-based criteria as a measure of the saliency of the groups.

\section{The WAG Metric: Theory and Empirical Insights}
This section introduces \textsc{WAG}, its theoretical foundations, and empirical validation. Subsection~\ref{sec:theoretical_analysis} provides a formal analysis of \textsc{WAG} as a parameter sensitivity metric under multiplicative perturbations, and Subsection~\ref{sec:emp_analysis} demonstrates its empirical significance through targeted masking experiments across architectures and task domains.

\subsection{Weight-Adjusted Gradients}

We introduce \textit{Weight-Adjusted Gradients (WAG)}, a simple parameter importance metric that combines parameter magnitude and first-order gradient information. The central motivation is that neither signal alone adequately captures the parameter influence underlying severe performance degradation.

Magnitude-based methods implicitly assume that parameters with larger absolute values contribute more strongly to deep neural network performance. While effective in some settings, these approaches ignore the local sensitivity of the loss landscape and may assign high importance to parameters that are functionally inactive. Conversely, gradient-based methods capture instantaneous sensitivity but can overemphasize parameters whose gradients are large despite having negligible scale or long-term functional impact. WAG combines these two complementary signals by measuring the interaction between a parameter and its corresponding gradient. Formally, for a parameter vector $\theta \in \mathbb{R}^d$ and loss function $\ell(z;\theta)$ evaluated on a test sample $z$, we define the WAG score of parameter $\theta_i$ as
\begin{equation}
\boxed{
\mathrm{WAG}_i
=
-
\theta_i
\frac{\partial \ell(z;\theta)}{\partial \theta_i}.
}
\end{equation}

Intuitively, WAG measures scale of loss change under relative perturbations of a parameter. Parameters with large-magnitude WAG scores are simultaneously large in scale and change sensitive, indicating that small multiplicative modifications can induce substantial changes in model behavior. The negative sign aligns the score with the local descent direction of the loss. Importantly, this interpretation does not come at the expense of computational efficiency. Unlike second-order approaches, WAG requires only quantities already available during standard backpropagation and introduces negligible computational overhead. The metric can be computed per parameter, aggregated across layers, or accumulated across datasets depending on the application. In practice, we can compute dataset-level scores by averaging WAG values.

WAG is closely related to several classical ideas in optimization and pruning. The quantity $\theta_i \frac{\partial \ell}{\partial \theta_i}$ appears implicitly in first-order Taylor approximations for network saliency and pruning~\citep{lecun1990optimal, molchanov2017pruning, lee2019snip}. Related notions have also been studied in second-order saliency methods and sensitivity-based pruning approaches~\citep{hassibi1993second, gale2019state}. However, prior work has largely treated these interaction terms as auxiliary approximations for pruning objectives rather than as a direct object of study for understanding parameter importance in large-scale models. Our results suggest that this interaction term itself carries substantial structural information about trained networks.

As we will show in the following sections, WAG consistently identifies a sparse subset of highly influential parameters whose perturbation induces disproportionate degradation in model performance. These effects emerge across architectures, scales, and downstream applications, suggesting that WAG captures a fundamental aspect of parameter sensitivity in modern LLMs, paving the way for foundational parameter-influence work in AI/ML.

\subsection{Theoretical Analysis}
\label{sec:theoretical_analysis}

In this subsection, we provide a theoretical interpretation of \textsc{WAG} from a log-space parameterization and also show that it naturally characterizes local sensitivity under multiplicative parameter perturbations. Building on this, we next show that the \textsc{WAG} identifies the direction of the greatest local sensitivity, leading to a view of trained LLMs as scale-balanced systems whose most fragile coordinates are precisely those with the largest absolute \textsc{WAG} scores. \footnote{Due to space constraints, the detailed proofs of the theorems presented in this subsection are provided in Appendix~\ref{app:proofs}.}

\subsubsection{WAG as a Gradient in Log-Parameter Space}

We first show that WAG arises naturally when expressing optimization in logarithmic parameter coordinates.

\begin{restatable}[WAG as Log-Parameter Gradient]{theorem}{logGradient}
\label{thm:log_gradient}
Assume $\theta_i \neq 0$ and define the logarithmic parameterization:
\begin{equation*}
u_i = \log |\theta_i|.
\end{equation*}
Then the gradient of the loss with respect to $u_i$ satisfies:
\begin{equation*}
\frac{\partial \ell(z;\theta)}{\partial u_i}
=
\theta_i
\frac{\partial \ell(z;\theta)}{\partial \theta_i}.
\end{equation*}
Equivalently:

\begin{equation*}
\mathrm{WAG}_i
=
-
\frac{\partial \ell(z;\theta)}{\partial u_i}.
\end{equation*}
\end{restatable}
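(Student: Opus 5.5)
The plan is a direct chain-rule computation. The only subtlety is that the map $\theta_i \mapsto u_i = \log|\theta_i|$ is not invertible globally, since it forgets the sign. So the first step is to make the reparameterization precise. Fix the sign $s_i = \operatorname{sgn}(\theta_i) \in \{-1,+1\}$, which is well defined because $\theta_i \neq 0$ by assumption. Then on the open half-line containing $\theta_i$ (either $(0,\infty)$ or $(-\infty,0)$) the map is a smooth bijection with inverse
\begin{equation*}
\theta_i = s_i e^{u_i}.
\end{equation*}
All other coordinates $\theta_j$, $j \neq i$, are held fixed. The loss is then viewed as a function of $u_i$ through the composition $u_i \mapsto s_i e^{u_i} \mapsto \ell(z;\theta)$. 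Here we assume $\ell(z;\cdot)$ is differentiable at $\theta$, which is already implicit in the definition of $\mathrm{WAG}_i$.

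Next, apply the one-variable chain rule. Since $\theta_j$ for $j\neq i$ does not depend on $u_i$,
\begin{equation*}
\frac{\partial \ell(z;\theta)}{\partial u_i} = \frac{\partial \ell(z;\theta)}{\partial \theta_i}\,\frac{d\theta_i}{du_i}.
\end{equation*}
Computing the derivative gives $\frac{d\theta_i}{du_i} = s_i e^{u_i} = \theta_i$. Substituting yields $\frac{\partial \ell}{\partial u_i} = \theta_i \frac{\partial \ell}{\partial \theta_i}$. Comparing with the boxed definition of $\mathrm{WAG}_i$ then gives $\mathrm{WAG}_i = -\frac{\partial \ell}{\partial u_i}$ immediately. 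As a cross-check, one can instead differentiate $u_i = \log|\theta_i|$ directly: $\frac{du_i}{d\theta_i} = 1/\theta_i$ for $\theta_i \neq 0$ on either branch, and then invert by the inverse function theorem. Both routes agree.

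There is no real obstacle. The one point needing care is the sign/branch issue: the identity must be stated locally, on the branch where $\operatorname{sgn}(\theta_i)$ is constant, so that the derivative $d\theta_i/du_i$ is meaningful. It is also worth noting why the $\theta_i \neq 0$ hypothesis is needed: $\log|\theta_i|$ is undefined at $0$, although the right-hand side $\theta_i \partial_i \ell$ extends continuously by $0$ there.

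Finally, I would add a remark that also sets up the multiplicative-perturbation interpretation used later. A shift $u_i \mapsto u_i + \epsilon$ corresponds to $\theta_i \mapsto \theta_i e^{\epsilon} \approx \theta_i(1+\epsilon)$. Hence $\mathrm{WAG}_i$ is the first-order rate of loss decrease under relative rescaling of $\theta_i$.
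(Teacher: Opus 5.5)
Your proposal is correct and follows the paper's proof: write $\theta_i = \operatorname{sign}(\theta_i)e^{u_i}$, apply the chain rule, and use $\partial\theta_i/\partial u_i = \theta_i$. Fixing the sign $s_i$ so that the reparameterization is a local bijection on one half-line is a small gain in rigor, since the paper leaves that point implicit.
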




Theorem~\ref{thm:log_gradient} shows that WAG is not merely a heuristic combination of weights and gradients. Instead, it exactly corresponds to the gradient of the loss in logarithmic parameter coordinates. Consequently, WAG measures sensitivity to relative parameter changes rather than additive perturbations. We prove this relative sensitivity claim next.

\subsubsection{First-Order Sensitivity Under Multiplicative Perturbations}

We next show that WAG naturally characterizes first-order loss variation under multiplicative parameter perturbations.

\begin{restatable}[First-Order Multiplicative Sensitivity]{theorem}{multiplicativeSensitivity}
\label{thm:multiplicative}
Let $\theta' \in \mathbb{R}^d$ be obtained through multiplicative perturbations of the type:
\begin{equation*}
\theta_i' = \theta_i(1+\epsilon_i),
\end{equation*}
where $\epsilon_i$ are sufficiently small. Then the corresponding first-order change in loss satisfies
\begin{equation*}
\ell(z;\theta') - \ell(z;\theta)
=
-
\sum_{i=1}^d
\epsilon_i \mathrm{WAG}_i
+
\mathcal{O}(\|\epsilon\|^2).
\end{equation*}
\end{restatable}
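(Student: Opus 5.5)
The plan is a first-order Taylor expansion of $\ell(z;\cdot)$ about $\theta$, followed by substitution of the multiplicative displacement. Throughout, I assume $\ell(z;\cdot)$ is twice continuously differentiable in a neighbourhood of $\theta$. This regularity is implicit in the statement, since the $\mathcal{O}(\|\epsilon\|^2)$ remainder requires it.

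\textbf{Step 1: Taylor expansion.} Write $\theta' = \theta + \delta$ with $\delta_i = \theta_i \epsilon_i$, that is, $\delta = \mathrm{diag}(\theta)\,\epsilon$. Taylor's theorem with remainder, in integral or mean-value form, gives
\begin{equation*}
\ell(z;\theta+\delta) - \ell(z;\theta) = \sum_{i=1}^d \delta_i \frac{\partial \ell(z;\theta)}{\partial \theta_i} + R(\delta),
\qquad |R(\delta)| \le \tfrac12 \sup_{t\in[0,1]} \|\nabla^2 \ell(z;\theta+t\delta)\|\,\|\delta\|^2 .
\end{equation*}

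\textbf{Step 2: substitution.} Insert $\delta_i = \theta_i\epsilon_i$ into the linear term, which becomes $\sum_i \epsilon_i \theta_i \partial_i \ell$. By the definition of $\mathrm{WAG}_i$ this equals $-\sum_i \epsilon_i \mathrm{WAG}_i$.

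\textbf{Step 3: remainder in terms of $\epsilon$.} Since $\|\delta\| \le \max_i|\theta_i|\,\|\epsilon\| = \|\theta\|_\infty \|\epsilon\|$, the displacement stays within the neighbourhood where the Hessian is bounded once $\epsilon$ is small enough. There $R = \mathcal{O}(\|\theta\|_\infty^2\|\epsilon\|^2) = \mathcal{O}(\|\epsilon\|^2)$, with $\theta$ held fixed. Coordinates with $\theta_i=0$ are harmless: they give $\delta_i=0$ and $\mathrm{WAG}_i=0$, so no assumption $\theta_i\neq0$ is needed here.

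As an alternative route, Theorem~\ref{thm:log_gradient} could be invoked. Writing $\theta_i' = \theta_i(1+\epsilon_i)$ corresponds to $u_i' = u_i + \log(1+\epsilon_i) = u_i + \epsilon_i + \mathcal{O}(\epsilon_i^2)$. Expanding in $u$ and using $\partial \ell/\partial u_i = -\mathrm{WAG}_i$ gives the same result. However, this route requires $\theta_i\neq0$ and $|\epsilon_i|<1$, so the direct route above is the one I would take.

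\textbf{Main obstacle.} The only real care needed is the remainder bound: making the constant in $\mathcal{O}(\cdot)$ depend only on $\theta$ and the local Hessian bound, and stating the smoothness assumption explicitly. Everything else is mechanical.
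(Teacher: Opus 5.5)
Your proposal is correct and follows the paper's proof essentially step for step: a first-order Taylor expansion about $\theta$, substitution of $\theta_i'-\theta_i=\epsilon_i\theta_i$, and then the definition of $\mathrm{WAG}_i$. The paper silently rewrites $\mathcal{O}(\|\theta'-\theta\|^2)$ as $\mathcal{O}(\|\epsilon\|^2)$, and your Step~3 makes that step rigorous by stating the $C^2$ assumption and using the bound $\|\delta\|_2\le\|\theta\|_\infty\|\epsilon\|_2$.
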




Theorem~\ref{thm:multiplicative} shows that WAG exactly characterizes the first-order effect of relative parameter perturbations on the loss. Parameters with large-magnitude WAG scores therefore correspond to directions that induce disproportionately large functional changes under multiplicative modification.

\subsubsection{WAG and Scale-Balanced LLMs}

The previous results establish that WAG admits two complementary interpretations. Theorem~\ref{thm:log_gradient} shows that WAG is exactly the gradient of the loss in logarithmic parameter coordinates, while Theorem~\ref{thm:multiplicative} shows that WAG characterizes first-order sensitivity under multiplicative parameter perturbations. These results suggest that trained neural networks are naturally organized in terms of parameter \textit{scales} rather than raw parameter values.

Modern transformer-based LLMs contain billions of parameters whose magnitudes vary substantially across layers, attention heads, and MLP blocks. Despite this heterogeneity, trained models exhibit remarkably stable behavior. This observation suggests that optimization learns a coordinated balance of parameter scales and since WAG measures the sensitivity of the loss to relative changes in these scales, it can serve as a natural tool for studying this \textit{balance}.

Modern transformer-based LLMs contain billions of parameters whose magnitudes vary substantially across layers, attention heads, and MLP blocks \citep{an2025systematic}. Despite this heterogeneity, trained models exhibit remarkably stable behavior. A key reason is that a network's behavior depends on the relative balance of these scales, so the loss responds to how parameter scales are coordinated across the model \citep{dinh2017sharp, ba2016layer}. This observation suggests that optimization settles into a coordinated balance of parameter scales. Prior measures probe this geometry by perturbing weights additively~\citep{foret2021sharpness}; in contrast, WAG measures the sensitivity of the loss to relative changes in these scales, making it a natural tool for studying this \textit{balance}.

Therefore, we next show that the WAG vector identifies the dominant directions governing local scale stability. In particular, the magnitude of the WAG vector determines the largest possible first-order increase in loss achievable through multiplicative perturbations. Consequently, parameters with large absolute WAG scores correspond to the coordinates that contribute most strongly to maintaining local scale balance.

\begin{restatable}[WAG Characterizes the Most Sensitive Log-Scale Direction]{theorem}{scaleBalance}
\label{thm:scale_balance}

Let $w = (\mathrm{WAG}_1,\ldots,\mathrm{WAG}_d) \in \mathbb{R}^d$
denote the WAG vector, and consider multiplicative parameter perturbations:
$\theta_i' = \theta_i(1+\epsilon_i),$
with perturbation vector:
$\epsilon = (\epsilon_1,\ldots,\epsilon_d).$
%
%
Then, among all multiplicative perturbations satisfying:
$\|\epsilon\|_2=r,$
the maximal first-order increase in loss is $r\|w\|_2,$ and is attained uniquely by perturbations parallel to $-w.$
Equivalently, the WAG vector defines the direction of greatest local sensitivity in logarithmic parameter space.
\end{restatable}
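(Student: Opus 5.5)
The plan is to reduce the statement to a Cauchy--Schwarz argument on the linear functional supplied by Theorem~\ref{thm:multiplicative}. By that theorem, for any multiplicative perturbation $\theta_i'=\theta_i(1+\epsilon_i)$ the first-order change in loss is
\begin{equation*}
\Delta_1(\epsilon) \;=\; -\sum_{i=1}^d \epsilon_i \mathrm{WAG}_i \;=\; -\langle \epsilon, w\rangle ,
\end{equation*}
with the remainder $\mathcal{O}(\|\epsilon\|^2)$ discarded, since the statement concerns only the first-order increase. So I will interpret ``maximal first-order increase'' as $\max_{\|\epsilon\|_2=r}\Delta_1(\epsilon)$. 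This is a linear objective over a sphere, and no further structure of the loss is needed.

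\textbf{Step 1 (upper bound).} Apply Cauchy--Schwarz: $-\langle \epsilon,w\rangle \le \|\epsilon\|_2\|w\|_2 = r\|w\|_2$.

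\textbf{Step 2 (attainment).} The choice $\epsilon^\star=-r\,w/\|w\|_2$ gives $-\langle\epsilon^\star,w\rangle = r\|w\|_2$, and it is parallel to $-w$.

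\textbf{Step 3 (uniqueness).} Equality in Cauchy--Schwarz holds iff $\epsilon$ and $-w$ are linearly dependent. Combined with the sign requirement that the value be $+r\|w\|_2$ rather than $-r\|w\|_2$, and with the norm constraint, this forces $\epsilon=\epsilon^\star$. I will write this out by decomposing $\epsilon = \alpha(-w/\|w\|_2)+\epsilon_\perp$ with $\epsilon_\perp\perp w$. Then $\Delta_1(\epsilon)=\alpha\|w\|_2$ and $\alpha^2+\|\epsilon_\perp\|^2=r^2$, so the maximum requires $\alpha=r$ and $\epsilon_\perp=0$.

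\textbf{Step 4 (log-space reformulation).} Use Theorem~\ref{thm:log_gradient}: $w=-\nabla_u \ell$ where $u_i=\log|\theta_i|$. To first order $\delta u_i=\log|1+\epsilon_i|\approx \epsilon_i$, so $\epsilon$ is the first-order displacement in $u$-coordinates. The maximizer $\epsilon^\star\propto -w=\nabla_u\ell$ is therefore the steepest-ascent direction in logarithmic parameter space, which is the ``equivalently'' clause.

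\textbf{Main obstacle.} The only delicate point is bookkeeping rather than mathematics. First, the claim must be stated at first order, since the $\mathcal{O}(r^2)$ term can perturb the true maximizer. Second, the degenerate case $w=0$ must be excluded or handled separately: there the maximum is $0$ and every direction attains it, so uniqueness fails. I will add the hypothesis $w\neq 0$ explicitly, along with $\theta_i\neq 0$ as in Theorem~\ref{thm:log_gradient}, which is needed for the log-space interpretation.
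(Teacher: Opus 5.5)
Your proposal is correct and follows the paper's proof essentially line for line: it uses the first-order expansion from Theorem~\ref{thm:multiplicative}, the Cauchy--Schwarz bound $-\langle\epsilon,w\rangle\le r\|w\|_2$, attainment at $\epsilon^\star=-r\,w/\|w\|_2$, and the log-space reading via Theorem~\ref{thm:log_gradient}. Your orthogonal-decomposition argument for uniqueness and your explicit hypothesis $w\neq 0$ tighten steps the paper leaves implicit, since the paper divides by $\|w\|_2$ without comment.
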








\looseness-1Therefore, no multiplicative perturbation with the same norm can produce a larger first-order increase in loss. Since Theorem~\ref{thm:log_gradient} showed that WAG is the gradient in logarithmic parameter coordinates, it follows that the WAG vector defines the direction of greatest local sensitivity in log-parameter space. 

Theorem~\ref{thm:scale_balance} provides a geometric interpretation of WAG. In log-parameter space, the WAG vector identifies the unique direction along which multiplicative perturbations produce the largest first-order increase in loss. The norm $\|w\|_2$ quantifies the magnitude of this worst-case sensitivity, while the coordinates with the largest absolute WAG values contribute most strongly to it. 

Combined with Theorem~\ref{thm:log_gradient}, this result suggests that trained transformers (i.e. LLMs) can be viewed as \textit{scale-balanced} systems. Optimization drives the network toward configurations in which many parameter scales cooperate to maintain low loss. The WAG vector captures the residual sensitivity of this equilibrium and identifies the directions along which the balance remains most fragile. 
An immediate consequence is that not all parameters contribute equally to maintaining this equilibrium. Parameters with large absolute WAG scores dominate the most sensitive log-scale direction and therefore account for a disproportionate fraction of the model's local scale stability. Perturbing or masking these parameters removes the coordinates that contribute most strongly to preserving the learned \textit{balance} of \textit{scales}. 

This perspective provides a theoretical explanation for the empirical phenomena and observations we provide later in the paper. When parameters are ranked according to $|\mathrm{WAG}_i|$, a subset of coordinates often emerge whose modification causes significant degradation in model performance. As described via Theorem~\ref{thm:scale_balance}, these parameters correspond precisely to the coordinates that dominate the model's most sensitive log-scale direction. In sum, masking them therefore removes a substantial portion of the network's scale-balancing structure, which results in significant performance degradation and even model collapse.

\begin{figure*}[t]
    \centering
    \includegraphics[width=0.80\textwidth]{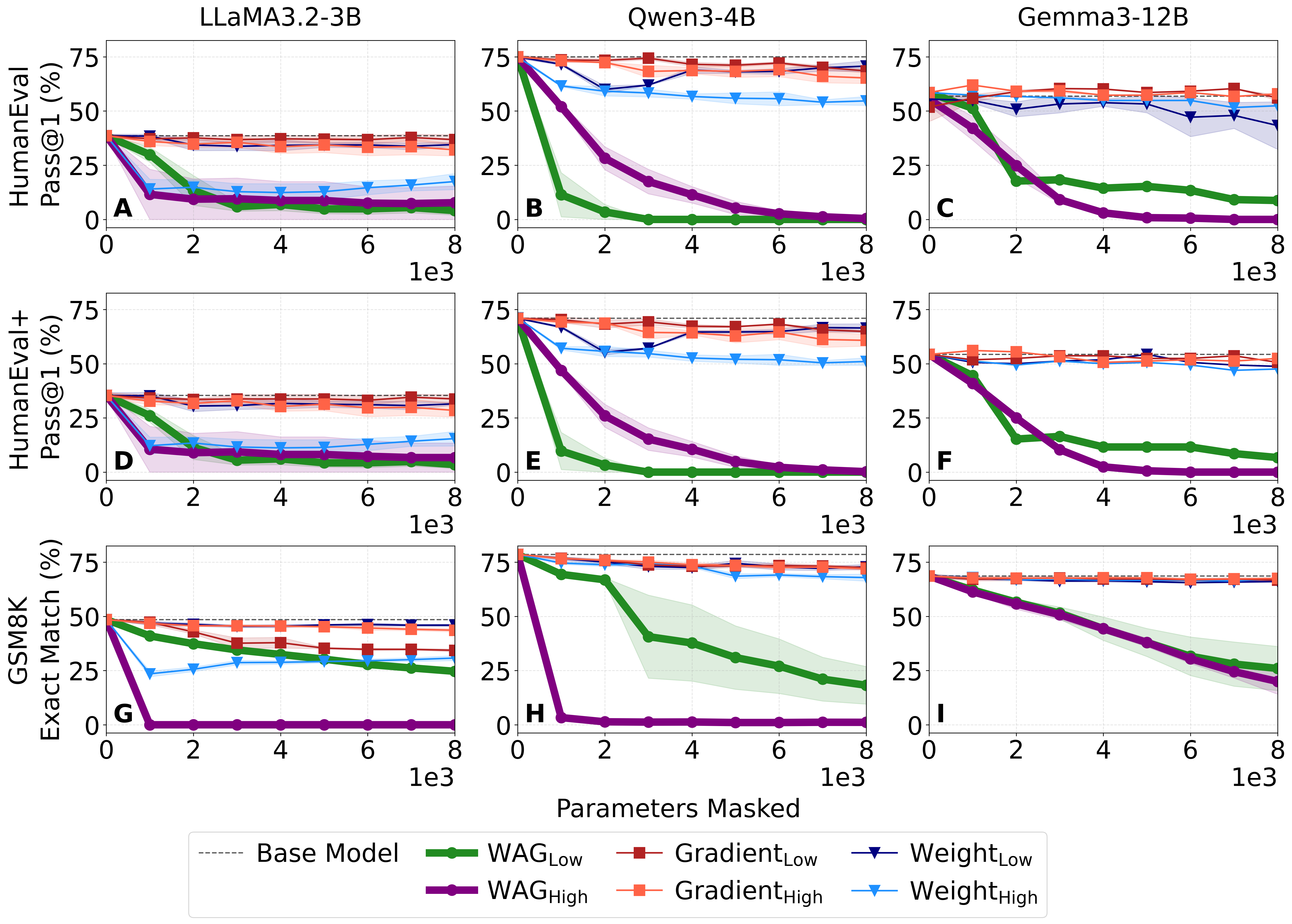}\vspace{-2mm}
    \caption{Visualization of performance degradation due to masking parameters across various models, LLaMA3.2-3B (A, D and G), Qwen3-4B (B, E and H), and Gemma3-12B (C, F and I) evaluated on \textit{HumanEval} (A, B and C), \textit{HumanEval+} (D, E and F), and \textit{GSM8K} (G, H and I) datasets. The performance is measured using \textit{Pass@1} and \textit{exact match} accuracy across different numbers of masked parameters. Parameters are selected based on having lowest gradient $\rightarrow$ Gradient$_\text{Low}$, highest gradient $\rightarrow$ Gradient$_\text{High}$, lowest weight $\rightarrow$ Weight$_\text{Low}$, highest weight $\rightarrow$ Weight$_\text{High}$, lowest \textsc{WAG} $\rightarrow$ WAG$_\text{Low}$, and highest \textsc{WAG} $\rightarrow$ WAG$_\text{High}$ values respectively. Masking the WAG$_\text{Low}$ and WAG$_\text{High}$ identified parameters leads to rapid model collapse with comparatively few masked parameters.}
    \label{fig:high_low}\vspace{-4mm}
\end{figure*}

\subsection{Empirical Analysis}\label{sec:emp_analysis}

\textbf{Impact of masking \textsc{WAG} parameters with extreme scores.} \label{sec:empirical}
To empirically observe the impact of \textsc{WAG}-identified parameters, we observe the masking effect of parameters that have the \textit{lowest}, \textit{highest}, and \textit{largest-magnitude} \textsc{WAG} values, represented as WAG$_\text{Low}$, WAG$_\text{High}$, and |WAG| respectively throughout the paper. To further solidify that, this behavior is only unique to WAG values and not inherited from its components; we compare it to individual components that \textsc{WAG} consists of: the gradient and weights. We increased the number of parameters masked to one thousand each time and observed the free-form generation-based performance of the masked model on coding and math tasks. For the coding tasks, we train the model on coding-related datasets and evaluate it on the \textit{HumanEval} \cite{chen2021evaluating} and \textit{HumanEval+} \cite{evalplus} datasets and report the \textit{pass@1} accuracy \cite{chen2021evaluating}, as conducted in recent work \cite{zhang2026boostinglargelanguagemodels}. To evaluate it on math tasks, we selected the train and test split of \textit{GSM8K} dataset \cite{cobbe2021training} for training and evaluation, respectively, and report the \textit{exact match} accuracy. As per the definition of \textsc{WAG}, we took the gradients of the evaluation set across all experiments presented in the paper and the weights of the trained model. We conduct the experiment on three models, LLaMA3.2-3B \cite{touvron2023llama}, Qwen3-4B \cite{yang2025qwen3}, and Gemma3-12B \cite{team2025gemma}.

\begin{figure*}[!t]
    \centering
    \includegraphics[width=0.80\textwidth]{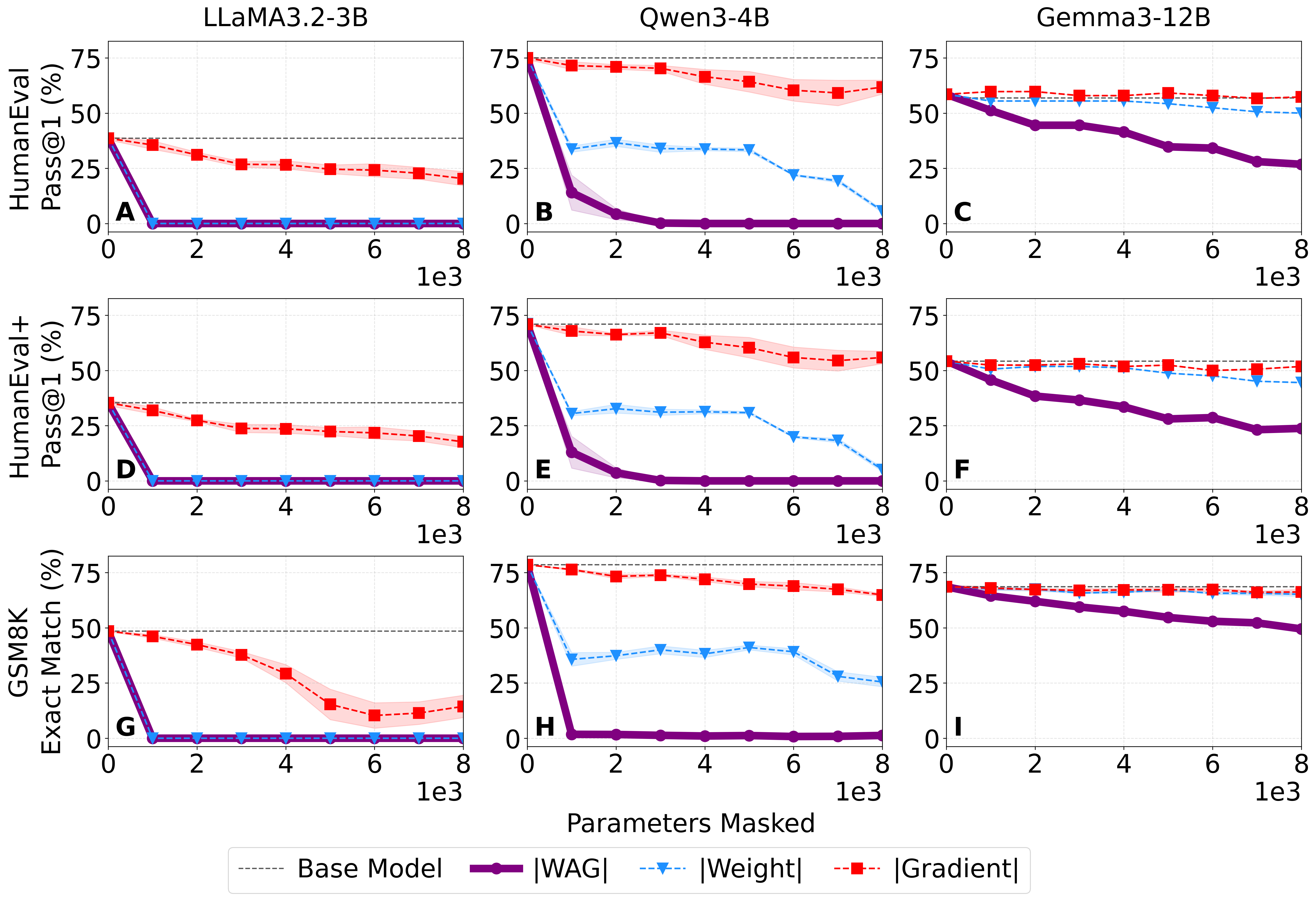}
    \caption{\textit{Pass@1} and \textit{exact match} accuracy evaluated on the \textit{HumanEval} (A, B and C), \textit{HumanEval+} (D, E and F), and \textit{GSM8K} (G, H and I) datasets for masked LLaMA3.2-3B (A, D and G), Qwen3-4B (B, E and H), and Gemma3-12B (C, F and I) models across different numbers of masked parameters, where parameters are selected based on having the largest-magnitude gradient $\rightarrow$ |Gradient|, weight $\rightarrow$ |Weight|, and \textsc{WAG} $\rightarrow$ |WAG| values. Masking the largest-magnitude \textsc{WAG} identified parameters leads to rapid model collapse with comparatively few masked parameters.}
    \label{fig:abs}
    \vspace{-8mm}
\end{figure*}

From Figures \ref{fig:high_low} and \ref{fig:abs}, it is clear that the WAG$_\text{Low}$, WAG$_\text{High}$, and |WAG| identified parameters behave distinctly compared to magnitude and gradient identified parameters. When these parameters are masked, they lead to model collapse consistently at a higher rate, even when only a tiny number of parameters are masked. This shows that these identified parameters are important to the model, which makes the model unable to predict the next tokens properly when masked.


\textbf{Qualitative analysis of model collapse induced by WAG-parameter removal.}
To characterize how, rather than merely that, the model collapses, we inspect the generations of the corresponding base model and the masked models. Figures \ref{fig:qual-humaneval} and \ref{fig:qual-gsm8k} show the verbatim continuations for Qwen3-4B on \textit{HumanEval} (4,000 or about 1e-4\% parameters masked) and \textit{GSM8K} (8,000 or about 2e-4\% parameters masked). Two observations follow. First, the \textsc{WAG}-based collapse is both a decrease in correctness and a failure of generative capability. In the coding domain, the model fails to generate any function body, while in the mathematical reasoning domain, it fails to produce any reasoning chain. Masking the \textsc{WAG}-based parameters drives the model into repetition loops: repeating a long sequence of tokens, or a single token separated by a space, such as \texttt{Greg} being produced until the generation length cap. Even if the collapse is not prominent, masking \textsc{WAG}-based parameters preserves fluency but corrupts the reasoning. For example, dividing \$20 among three siblings rather than four (omitting Greg himself) and returning $6.\overline{6}$. Second, this behavior is specific to \textsc{WAG}: under the same number of parameters masked, every gradient- and weight-masked variant recovers a fluent, correct solution nearly indistinguishable from the base unmasked model, while the \textsc{WAG} variants collapse.

\begin{figure}[t]
\centering

\newsavebox{\humanevalbox}
\begin{lrbox}{\humanevalbox}
\begin{minipage}{1.14\textwidth}


\begin{tcblisting}{
  % title={Base Model (pass@1 76.2\%)},
  title={Base Model},
  colback=green!3, colframe=green!50!black,
  fonttitle=\bfseries\footnotesize,
  listing only,
  listing options={escapeinside={(*@}{@*)}},
  top=0pt, bottom=0pt,
  % listing options={basicstyle=\ttfamily\scriptsize,breaklines=true,columns=fullflexible},
}
def mean_absolute_deviation(numbers: List[float]) -> float:
    """ For a given list of input numbers, calculate Mean Absolute Deviation
    ...
    mean = sum(numbers) / len(numbers)
    absolute_diffs = [abs(num - mean) for num in numbers]
    mad = sum(absolute_diffs) / len(absolute_diffs)
    return mad
(*@\textcolor{blue}{\textbf{Observation:} \textit{Base model returns the correct implementation as output.}}@*)
\end{tcblisting}

\begin{tcblisting}{
  % title={WAG Low (pass@1 0.0\%)},
  title={WAG$_\text{Low}$},
  colback=red!3, colframe=red!40,
  fonttitle=\bfseries\footnotesize,
  listing only,
  listing options={escapeinside={(*@}{@*)}},
  top=0pt, bottom=0pt,
  % listing options={basicstyle=\ttfamily\scriptsize,breaklines=true,columns=fullflexible}
}
def mean_absolute_deviation(numbers: List[float]) -> float:    """    """
    """    """    """    """    """    """    """    """    """    """    """
    """    """    """    """    """    """    """    """    """    """    """
(*@\textcolor{blue}{\textbf{Observation:} \textit{Empty docstring token repeats to 1805 chars; no function body emitted.}}@*)
\end{tcblisting}

\begin{tcblisting}{
  % title={WAG High (pass@1 18.9\%)},
  title={WAG$_\text{High}$},
  colback=red!3, colframe=red!40,
  fonttitle=\bfseries\footnotesize,
  listing only,
  listing options={escapeinside={(*@}{@*)}},
  top=0pt, bottom=0pt,
}
def mean_absolute_deviation(numbers: List[float]) -> float:
    """
    For a given list of input numbers, calculate Mean Absolute Deviation
    around the mean of this dataset.
    ...
    >>> mean_absolute_deviation([1.0, 2.0, 3.0, 4.0])
    1.0
    """
def mean_absolute_deviation(numbers: List[float]) -> float:
    """ ... same docstring ... """
def mean_absolute_deviation(numbers: List[float]) -> float:
    """ ... same docstring ... """
(*@\textcolor{blue}{\textbf{Observation:} \textit{Signature + docstring block re-emitted to 2004 chars; never reaches body.}}@*)
\end{tcblisting}

\begin{tcblisting}{
  % title={abs-WAG High (pass@1 0.0\%)},
  title={|WAG|},
  colback=red!3, colframe=red!40,
  fonttitle=\bfseries\footnotesize,
  listing only,
  listing options={escapeinside={(*@}{@*)}},
  top=0pt, bottom=0pt,
  % listing options={basicstyle=\ttfamily\scriptsize,breaklines=true,columns=fullflexible}
}
def mean_absolute_deviation(numbers: List[float]) -> float:
    """ For a given list of input numbers, calculate Mean Absolute Deviation
    around the mean of this dataset.
    Mean Absolute Deviation is the average absolute difference between each
    element and a centerpoint (mean in this case):    Mean Absolute Deviation
    is the average absolute difference between each element and a centerpoint
    (mean in this case):    Mean Absolute Deviation is the average absolute
    difference between each element and a centerpoint (mean in this case):
(*@\textcolor{blue}{\textbf{Observation:} \textit{This single sentence repeats to 2568 chars; no body emitted.}}@*)
\end{tcblisting}

\begin{tcblisting}{
  % title={Gradient / Weight representative (pass@1 34-73\%)},
  title={Gradient / Weight representative},
  colback=blue!3, colframe=blue!40,
  fonttitle=\bfseries\footnotesize,
  listing only,
  listing options={escapeinside={(*@}{@*)}},
  top=0pt, bottom=0pt,
  % listing options={basicstyle=\ttfamily\scriptsize,breaklines=true,columns=fullflexible}
}
def mean_absolute_deviation(numbers: List[float]) -> float:
    """ For a given list of input numbers, calculate Mean Absolute Deviation
    ...
    mean = sum(numbers) / len(numbers)
    deviations = [abs(num - mean) for num in numbers]
    return sum(deviations) / len(deviations)
(*@\textcolor{blue}{\textbf{Observation:} \textit{All six gradient/weight-based masked models return the correct implementation as output.}}@*)
\end{tcblisting}

\end{minipage}
\end{lrbox}

\resizebox{0.87\textwidth}{!}{\usebox{\humanevalbox}}\vspace{-1.5mm}

\caption{Verbatim Qwen3-4B generations on \textit{HumanEval} at 4,000 masked parameters. The WAG$_\text{Low}$ variant emits only empty docstrings; WAG$_\text{High}$ repeats the signature and docstring without a body; |WAG| loops on a single sentence. All six gradient- and weight-masked variants return the canonical solution.}
\label{fig:qual-humaneval}
\vspace{-6mm}
\end{figure}

\begin{figure}[t]
\centering\footnotesize
\begin{tcolorbox}[colback=gray!4,colframe=gray!40,boxrule=0.3pt,left=2pt,right=2pt,top=2pt,bottom=2pt]
\textbf{Prompt:}\emph{``While walking with his 3 younger siblings, Greg found \$20 and split it
equally. How much did each get?''}
\end{tcolorbox}

\begin{tcblisting}{
  % title={Base Model (Exact match 78.5\%)},
  title={Base Model},
  colback=green!3, colframe=green!50!black,
  fonttitle=\bfseries\footnotesize,
  listing only,
  listing options={escapeinside={(*@}{@*)}},
  top=0pt, bottom=0pt,
  % listing options={basicstyle=\ttfamily\scriptsize,breaklines=true,columns=fullflexible}
}
There were 1 + 3 = 4 of them who found the money.
Each got 20 / 4 = $5.   ####  5
(*@\textcolor{blue}{\textbf{Observation:} \textit{Correct answer.}}@*)
\end{tcblisting}

\begin{tcblisting}{
  % title={WAG Low (Exact match 32.4\%)},
  title={WAG$_\text{Low}$},
  colback=red!3, colframe=red!40,
  fonttitle=\bfseries\footnotesize,
  listing only,
  listing options={escapeinside={(*@}{@*)}},
  top=0pt, bottom=0pt,
  % listing options={basicstyle=\ttfamily\scriptsize,breaklines=true,columns=fullflexible}
}
Greg found $20 and decided to split it equally among his 3 siblings,
so each of them got $20 / 3 = $6 and 2/3.   ####  6
(*@\textcolor{blue}{\textbf{Observation:} \textit{Fluent, but omits Greg and divides by 3 instead of 4.}}@*)
\end{tcblisting}

\begin{tcblisting}{
  % title={WAG High (Exact match 1.0\%)},
  title={WAG$_\text{High}$},
  colback=red!3, colframe=red!40,
  fonttitle=\bfseries\footnotesize,
  listing only,
  listing options={escapeinside={(*@}{@*)}},
  top=0pt, bottom=0pt,
  % listing options={basicstyle=\ttfamily\scriptsize,breaklines=true,columns=fullflexible}
}
To be fair to his 3 1 1 1 1 1 1 1 1 1 1 1 1 1 1 1 1 1 1 1 1 1 1 1 1 1 1 1 1
1 1 1 1 1 1 1 1 1 1 1 1 1 1 1 1 1 1 1 1 1 1 1 1 1 1 1 1 1 1 1 1 1 1 1 1 1 1
(*@\textcolor{blue}{\textbf{Observation:} \textit{Token "1" repeats to the length cap.}}@*)
\end{tcblisting}

\begin{tcblisting}{
  % title={abs-WAG High (Exact match 1.1\%)},
  title={|WAG|},
  colback=red!3, colframe=red!40,
  fonttitle=\bfseries\footnotesize,
  listing only,
  listing options={escapeinside={(*@}{@*)}},
  top=0pt, bottom=0pt,
  % listing options={basicstyle=\ttfamily\scriptsize,breaklines=true,columns=fullflexible}
}
GregGregGregGregGregGregGregGregGregGregGregGregGregGregGregGregGregGregGreg
GregGregGregGregGregGregGregGregGregGregGregGregGregGregGregGregGregGregGreg
(*@\textcolor{blue}{\textbf{Observation:} \textit{Token "Greg" repeats to 2048 chars; the extracted answer is the same loop.}}@*)
\end{tcblisting}

\begin{tcblisting}{
  % title={Gradient / Weight representative (Exact match 22--71\%)},
  title={Gradient / Weight representative },
  colback=blue!3, colframe=blue!40,
  fonttitle=\bfseries\footnotesize,
  listing only,
  listing options={escapeinside={(*@}{@*)}},
  top=0pt, bottom=0pt,
  % listing options={basicstyle=\ttfamily\scriptsize,breaklines=true,columns=fullflexible}
}
There are 4 of them because 1 + 3 = 4.
Each of them gets $5 because 20 / 4 = 5.   ####  5  
(*@\textcolor{blue}{\textbf{Observation:} \textit{All six gradient/weight variants recover the answer.}}@*)
\end{tcblisting}\vspace{-3mm}
\caption{Verbatim Qwen3-4B generations on \textit{GSM8K} at 8,000 masked parameters.
WAG$_\text{High}$ and |WAG| masking produces pure token repetition (``1\ldots'', ``Greg\ldots''); WAG$_\text{Low}$ masking stays fluent but miscounts. All six gradient- and weight-masked variants solve the problem.}
\label{fig:qual-gsm8k}
\vspace{-5mm}
\end{figure}

\section{Utilizing WAG in Diverse Applications}\vspace{-4mm}

We now demonstrate how WAG's ability to characterize local sensitivity on loss under multiplicative parameter perturbations can be leveraged across several downstream applications. Specifically, we examine: (A) expert allocation in Mixture-of-Experts models, (B) efficient and effective machine unlearning through targeted layer updates, (C) mixed-precision quantization via the identification of precision-sensitive layers, and (D) layer selection for effective knowledge editing. We evaluate all the applications of WAG based on free-form generation for LLaMA3.2-3B and Gemma3-12B models.\footnote{Owing to space constraints, we provide details of the performance metrics and the implementation used in the experiments in this section in Appendices \ref{app:perf_metrics} and \ref{app:exp_setup}, respectively.}

\subsection{Application A: \textsc{WAG} for Mixture-of-Experts Allocation}
\label{sec:expert_allocation}

We first apply \textsc{WAG} to the problem of allocating a fixed budget of $N$ parallel experts across the layers of a Mixture-of-Experts model. Given a total budget $N$ and a router that selects at most the top-$2$ experts per layer, the task is to decide how many experts each layer should receive so as to maximize overall performance. Our allocation principle is that layers whose parameters are highly loss-sensitive to perturbation contribute most to model behavior and should therefore be assigned a larger share of experts to adequately capture their complexity. Hence, we identify the layers containing more sensitive parameters using WAG and assign more experts. We instantiate this principle with three \textsc{WAG}-based sensitivity criteria, WAG$_\text{Low}$, WAG$_\text{High}$, and |WAG|.

The \textsc{WAG}-based baselines are evaluated against AlphaLoRA \cite{qing2024alphalora} and three fixed MoLA configurations \cite{gao2025mola}, triangle (2 4 6 8), rectangle (5 5 5 5), and hourglass (8 2 2 8). We report zero-shot performance on \textit{MRPC} \cite{dolan2005automatically}, \textit{CoLA} \cite{wang2018glue}, and \textit{ScienceQA} \cite{lu2022learn} across LLaMA3.2-3B and Gemma3-12B models. At a high level, we convert \textsc{WAG} values into per-layer expert counts in two steps. First, we score each layer by how many sensitive parameters it contains. We assign every parameter a global rank according to its \textsc{WAG} value (a higher rank denoting a more sensitive parameter under the chosen criterion) and define a layer's score as the mean rank of its parameters, so that all parameters are weighted equally in the comparison. Second, we map these layer scores to integer expert counts following an approach similar to LayerIF~\cite{askari2026layerif} and AlphaLoRA~\cite{qing2024alphalora}. The full derivation of the allocation procedure is deferred to Appendix~\ref{app:expert_allocation}.

\begin{table*}[t]
\centering
\caption{Comparison of \textsc{WAG}-guided expert allocation performance compared against AlphaLoRA and three fixed MoLA configurations (triangle, rectangle, hourglass) assessed using \textit{Zero-shot} accuracy (\%) over 3 seeds, reporting mean and Standard Error of the Mean (SEM) as mean$_{\pm\text{SEM}}$, evaluated on \textit{MRPC}, \textit{CoLA}, and \textit{ScienceQA}. \textsc{WAG} columns correspond to layer scoring by the lowest, highest, and absolute \textsc{WAG} values. The total expert budget is fixed ($N=140$ for LLaMA3.2-3B, $N=240$ for Gemma3-12B) with top-$2$ routing. Best average performance is in \colorbox{green!20}{\textbf{green}} and second best is \colorbox{green!20}{green}.}
\label{tab:moe}
\resizebox{0.9\textwidth}{!}{%
\begin{tabular}{l| c ccc ccc}
\toprule
\multirow{2}{*}{\textbf{Dataset}} & \multirow{2}{*}{\textbf{AlphaLoRA}} & \multicolumn{3}{c}{\textbf{MoLA}} & \multicolumn{3}{c}{\textbf{WAG}} \\
\cmidrule(lr){3-5} \cmidrule(lr){6-8}
 & & Triangle & Rectangle & Hourglass & WAG$_\text{Low}$ & WAG$_\text{High}$ & $|\text{WAG}|$ \\
\midrule
\multicolumn{8}{l}{LLaMA3.2-3B} \\
\midrule
MRPC & 47.17$_{\pm7.279}$ & 52.41$_{\pm1.337}$ & 50.78$_{\pm2.320}$ & 55.15$_{\pm1.117}$ & 54.94$_{\pm6.354}$ & 54.47$_{\pm1.239}$ & 54.59$_{\pm3.229}$ \\
CoLA & 78.62$_{\pm0.838}$ & 78.33$_{\pm1.578}$ & 77.47$_{\pm1.237}$ & 76.93$_{\pm1.175}$ & 79.58$_{\pm0.671}$ & 79.23$_{\pm0.975}$ & 79.99$_{\pm0.305}$ \\
ScienceQA & 53.04$_{\pm0.390}$ & 50.75$_{\pm0.378}$ & 51.60$_{\pm1.817}$ & 50.24$_{\pm1.669}$ & 51.09$_{\pm0.859}$ & 51.65$_{\pm1.701}$ & 46.22$_{\pm1.531}$ \\
\midrule
Average & 59.61$_{\pm2.265}$ & 60.50$_{\pm0.198}$ & 59.95$_{\pm1.327}$ & 60.77$_{\pm0.364}$ & \colorbox{green!20}{\textbf{61.87$_{\pm2.049}$}} & \colorbox{green!20}{61.78$_{\pm0.628}$} & 60.27$_{\pm1.054}$ \\
\midrule
\midrule
\multicolumn{8}{l}{Gemma3-12B} \\
\midrule
MRPC & 69.66$_{\pm1.418}$ & 70.80$_{\pm3.828}$ & 73.41$_{\pm2.351}$ & 73.43$_{\pm1.919}$ & 71.05$_{\pm2.432}$ & 73.78$_{\pm0.504}$ & 76.44$_{\pm1.045}$ \\
CoLA & 82.42$_{\pm0.565}$ & 83.83$_{\pm0.692}$ & 83.41$_{\pm0.418}$ & 81.72$_{\pm0.338}$ & 83.51$_{\pm0.111}$ & 83.80$_{\pm0.055}$ & 83.41$_{\pm0.534}$ \\
ScienceQA & 92.49$_{\pm0.170}$ & 92.57$_{\pm0.182}$ & 93.18$_{\pm0.208}$ & 93.06$_{\pm0.159}$ & 93.36$_{\pm0.197}$ & 92.54$_{\pm0.045}$ & 93.33$_{\pm0.173}$ \\
\midrule
Average & 81.53$_{\pm0.291}$ & 82.40$_{\pm1.280}$ & 83.33$_{\pm0.844}$ & 82.74$_{\pm0.612}$ & 82.64$_{\pm0.849}$ & \colorbox{green!20}{83.37$_{\pm0.164}$} & \colorbox{green!20}{\textbf{84.40$_{\pm0.367}$}} \\
\bottomrule
\end{tabular}%
}
\end{table*}


\textbf{Result.} \noindent \looseness-1
Table~\ref{tab:moe} shows that \textsc{WAG}-guided allocation consistently outperforms both AlphaLoRA and all fixed MoLA configurations on average. On LLaMA3.2-3B, allocation based on the WAG$_\text{Low}$ scores achieves the best average accuracy of 61.87\%, surpassing the strongest baseline MoLA hourglass by 1.10\%, with the largest gain on \textit{MRPC}. On Gemma3-12B, the |WAG| criterion performs best with an average of 84.40\%, outperforming the strongest baseline MoLA rectangle by 1.07\% and achieving the best per-dataset accuracy on \textit{MRPC}. Notably, WAG$_\text{High}$ remains a strong and stable criterion across both models, ranking second on both LLaMA3.2-3B and Gemma3-12B. These results indicate that sensitivity-aware expert allocation derived from \textsc{WAG} is more effective than both heuristic fixed MoLA and prior Heavy-Tailed Self-Regularization theory-driven allocation, AlphaLoRA.

\subsection{Application B: \textsc{WAG} for Targeted Unlearning}
\label{sec:application_unlearning}
We next apply \textsc{WAG} to targeted unlearning, where the goal is to remove the influence of a designated forget set from a trained model. We build on Gradient Difference (GradDiff) \cite{liu2022continual}, which by default updates all model parameters during unlearning. Our claim is that the information to be removed is concentrated in the layers whose parameters are most sensitive to the forget set. Restricting updates to those layers identified by \textsc{WAG} should therefore (i) improve unlearning quality, since the updates concentrate on the parameters that are most sensitive to the information being removed, and (ii) reduce computational cost, since only a fraction of the parameters is updated. \textsc{WAG} computed with respect to the forget set $\mathcal{D}_f$ identifies the layers most responsible for the model's behavior on that set. We instantiate this with the same three \textsc{WAG}-based criteria as in Application A, WAG$_\text{Low}$, WAG$_\text{High}$, and $|\textsc{WAG}|$.

\begin{wraptable}[18]{r}{0.50\linewidth}
  \vspace{-2mm}
  \centering
  \caption{\looseness-1 Unlearning runtime in minutes (mean $\pm$ SEM) for standard GradDiff versus \textsc{WAG}-targeted GradDiff on only 75\% of layers on 10\% forget TOFU dataset for LLaMA3.2-3B and Gemma3-12B reported  over 3 runs. \textsc{WAG}-based unlearning updates are about 2.6$\times$ faster for larger models, thus being more efficient compared to standard GradDiff. Fastest time is in \colorbox{green!20}{\textbf{green}} and the second fastest time is in \colorbox{green!20}{green}.}
  \label{tab:tofu_train_runtime}
  \begin{minipage}{\linewidth}
  \centering
  \renewcommand{\arraystretch}{1.1}
  \setlength{\tabcolsep}{8pt}
  \resizebox{0.97\linewidth}{!}{%
  \begin{tabular}{@{}l|cc@{}}
    \toprule
    \multirow{2}{*}{\textbf{Method}} & \multicolumn{2}{c}{\textbf{Unlearning runtime (s)}} \\
    \cmidrule(l){2-3}
    & \textbf{LLaMA3.2-3B} & \textbf{Gemma3-12B} \\
    \midrule
    GradDiff & (2.59\,$\pm$\,0.02)$\times$10\textsuperscript{2} & (1.77\,$\pm$\,0.04)$\times$10\textsuperscript{3} \\\midrule
    GradDiff + WAG$_\text{Low}$ & \colorbox{green!20}{\textbf{(1.94\,$\pm$\,0.02)$\times$10\textsuperscript{2}}} & (7.25\,$\pm$\,0.14)$\times$10\textsuperscript{2} \\
    GradDiff + WAG$_\text{High}$ & \colorbox{green!20}{(2.18\,$\pm$\,0.03)$\times$10\textsuperscript{2}} & \colorbox{green!20}{(6.91\,$\pm$\,0.07)$\times$10\textsuperscript{2}} \\
    GradDiff + $|$WAG$|$ & (2.25\,$\pm$\,0.07)$\times$10\textsuperscript{2} & \colorbox{green!20}{\textbf{(6.42\,$\pm$\,0.33)$\times$10\textsuperscript{2}}} \\
    \bottomrule
  \end{tabular}%
  }
  \end{minipage}
\end{wraptable}

We evaluate on the \textit{TOFU} benchmark \cite{maini2024tofu}, using the \textit{10\% forget} split against standard GradDiff across models. At a high level, we select only 75\% of the layers identified by WAG to apply GradDiff. Following the layer-scoring scheme of Application A (Sec.~\ref{sec:expert_allocation}), we rank every parameter by its \textsc{WAG} value and score each layer by the mean rank of its parameters, so that layers are compared by how many forget-set-sensitive parameters they contain. Given a
selection ratio $\rho = 0.75$, we take the top-$\lceil \rho m \rceil$ layers by score
as the target set and run GradDiff updates only on their parameters, freezing
all others. The full selection procedure is given in Appendix \ref{app:application_unlearning}.

\begin{table}[t]
  \centering
  \caption{We compare unlearning performance of standard GradDiff against \textsc{WAG}-targeted GradDiff, which updates only the top 75\% of layers selected by WAG$_\text{Low}$, WAG$_\text{High}$, and |WAG| respectively denoted as GradDiff + WAG$_\text{Low}$, GradDiff + WAG$_\text{High}$, and GradDiff + |WAG|, across LLaMA3.2-3B and Gemma3-12B models. FFT denotes the fine-tuned model before unlearning, and we apply unlearning on the 10\% TOFU forget set. We report the performance metrics \textit{Model Utility}, \textit{Forget Truth Ratio (TR)}, and \textit{KS Forget} using mean $\pm$ standard error of the mean over 3 seeds. Best performance is in \colorbox{green!20}{\textbf{green}} and second best in \colorbox{green!20}{{green}}.}
  \label{tab:tofu_wag_75pct_3run}
  \begin{minipage}{\linewidth}
  \centering
  \renewcommand{\arraystretch}{1.1}
  \setlength{\tabcolsep}{5pt}
  \resizebox{0.9\linewidth}{!}{%
  \begin{tabular}{@{}l|l|c|cccc@{}}
    \toprule
    \multirow{2}{*}{\textbf{Model}} & \multirow{2}{*}{\textbf{Metric}}
      & \multirow{2}{*}{\textbf{FFT}} & \multicolumn{4}{c}{\textbf{Method}} \\
    \cmidrule(l){4-7}
    & & & \textbf{GradDiff}
      & \textbf{GradDiff + WAG$_\text{Low}$}
      & \textbf{GradDiff + WAG$_\text{High}$}
      & \textbf{GradDiff + $|$WAG$|$} \\
    \midrule
    \multirow{3}{*}{LLaMA3.2-3B}
      & Model Utility & 0.610$_{\pm0.004}$ & 0.565$_{\pm0.014}$ & \colorbox{green!20}{0.575$_{\pm0.006}$} & \colorbox{green!20}{\textbf{0.578$_{\pm0.008}$}} & 0.562$_{\pm0.002}$ \\
      & Forget TR     & 0.465$_{\pm0.002}$ & 0.497$_{\pm0.056}$ & 0.541$_{\pm0.023}$ & \colorbox{green!20}{0.546$_{\pm0.043}$} & \colorbox{green!20}{\textbf{0.589$_{\pm0.006}$}} \\
      & KS Forget     & $-$ & 0.588$_{\pm0.014}$ & 0.599$_{\pm0.013}$ & \colorbox{green!20}{0.607$_{\pm0.015}$} & \colorbox{green!20}{\textbf{0.613$_{\pm0.007}$}} \\
    \midrule
    \multirow{3}{*}{Gemma3-12B}
      & Model Utility & 0.615$_{\pm0.010}$ & 0.532$_{\pm0.039}$ & \colorbox{green!20}{0.567$_{\pm0.040}$} & 0.549$_{\pm0.010}$ & \colorbox{green!20}{\textbf{0.589$_{\pm0.042}$}} \\
      & Forget TR     & 0.328$_{\pm0.002}$ & 0.289$_{\pm0.031}$ & \colorbox{green!20}{0.340$_{\pm0.004}$} & 0.311$_{\pm0.014}$ & \colorbox{green!20}{\textbf{0.360$_{\pm0.008}$}} \\
      & KS Forget     & $-$ & 0.449$_{\pm0.012}$ & \colorbox{green!20}{\textbf{0.513$_{\pm0.004}$}} & 0.468$_{\pm0.139}$ & \colorbox{green!20}{0.490$_{\pm0.007}$} \\
    \bottomrule
  \end{tabular}%
  }
  \end{minipage}\vspace{-3mm}
\end{table}


\textbf{Result.} \noindent
From Table \ref{tab:tofu_wag_75pct_3run}, \textsc{WAG}-targeted GradDiff is better than standard full-parameter GradDiff while updating only 75\% of the layers across both models. Every \textsc{WAG} criterion exceeds GradDiff in all except for GradDiff + |\textsc{WAG}| in \textit{Model Utility}, and that single exception lies within the standard error. Crucially, these gains come without the utility trade-off. Averaged over both models and all three \textsc{WAG} criteria, \textsc{WAG}-targeted selection improves \textit{Forget Truth Ratio} by 14.5\%, \textit{KS Forget} by 6.2\%, and \textit{Model Utility} by 4.0\% relative to GradDiff. On \textit{Forget Truth Ratio}, \textsc{WAG} variants yield substantial relative gains, with a maximum improvement of 18.51\% on LLaMA3.2-3B and 24.57\% on Gemma3-12B, both achieved by |\textsc{WAG}|. \textsc{WAG} selection also improves \textit{KS Forget} across all criteria, with the largest single gain of 14.25\% on Gemma3-12B by WAG$_{\text{Low}}$. In terms of \textit{Model Utility}, \textsc{WAG}-targeted variants preserve or improve upon standard GradDiff, most notably on Gemma3-12B where |WAG| achieves a 10.71\% improvement; indeed, on Gemma3-12B |WAG| improves over GradDiff on all three metrics simultaneously. In Table \ref{tab:tofu_train_runtime}, we report the unlearning time of GradDiff and \textsc{WAG}-targeted GradDiff, where we see that \textsc{WAG}-targeted GradDiff is 1.2$\times$ and 2.6$\times$ faster on average compared to standard GradDiff for LLaMA3.2-3B and Gemma3-12B, respectively. In sum, \textsc{WAG} serves as an effective layer selection mechanism for targeted unlearning, attaining stronger forgetting and comparable or better utility than full-parameter unlearning at reduced update cost, with $|\textsc{WAG}|$ offering the best overall balance.

\subsection{Application C: \textsc{WAG} for Mixed Precision Quantization}
\label{sec:application_quantization}

We apply \textsc{WAG} to mixed-precision quantization, keeping the most sensitive submodules in full precision while quantizing the rest. We select at the submodule granularity the attention projections (query, key, value, output) and the feed-forward projections (gate, up, down). If the most loss-sensitive parameters are preserved, the performance deviation due to quantization should be minimal. Since \textsc{WAG} scores each parameter according to its local sensitivity under multiplicative perturbations, a submodule's collective sensitivity with respect to the test set can be estimated by averaging the scores of its parameters, providing a principled criterion for which submodules to preserve. We instantiate this with three criteria, WAG$_\text{Low}$, WAG$_\text{High}$, and $|\textsc{WAG}|$.

Following Sec.~\ref{sec:empirical}, we quantize a BFloat16 model trained on coding tasks
to FP8 (E4M3), preserving 5\% and 10\% of submodules, and compare \textsc{WAG} selection against random and the magnitude baseline used in PB-LLM  \cite{shang2024pbllm} and SliM-LLM \cite{huang2025slimllm}, with the rest quantized via Round-To-Nearest (RTN). We compute each parameter's \textsc{WAG} value on the test set and score each submodule by averaging over its parameters. Given a budget $\kappa \in \{5\%, 10\%\}$, we keep the top-$\kappa$ submodules in BFloat16 and quantize the rest to FP8. Full scoring definitions are in Appendix \ref{app:application_quantization}.

\begin{figure*}[t]
    \centering
    \includegraphics[width=0.7\textwidth]{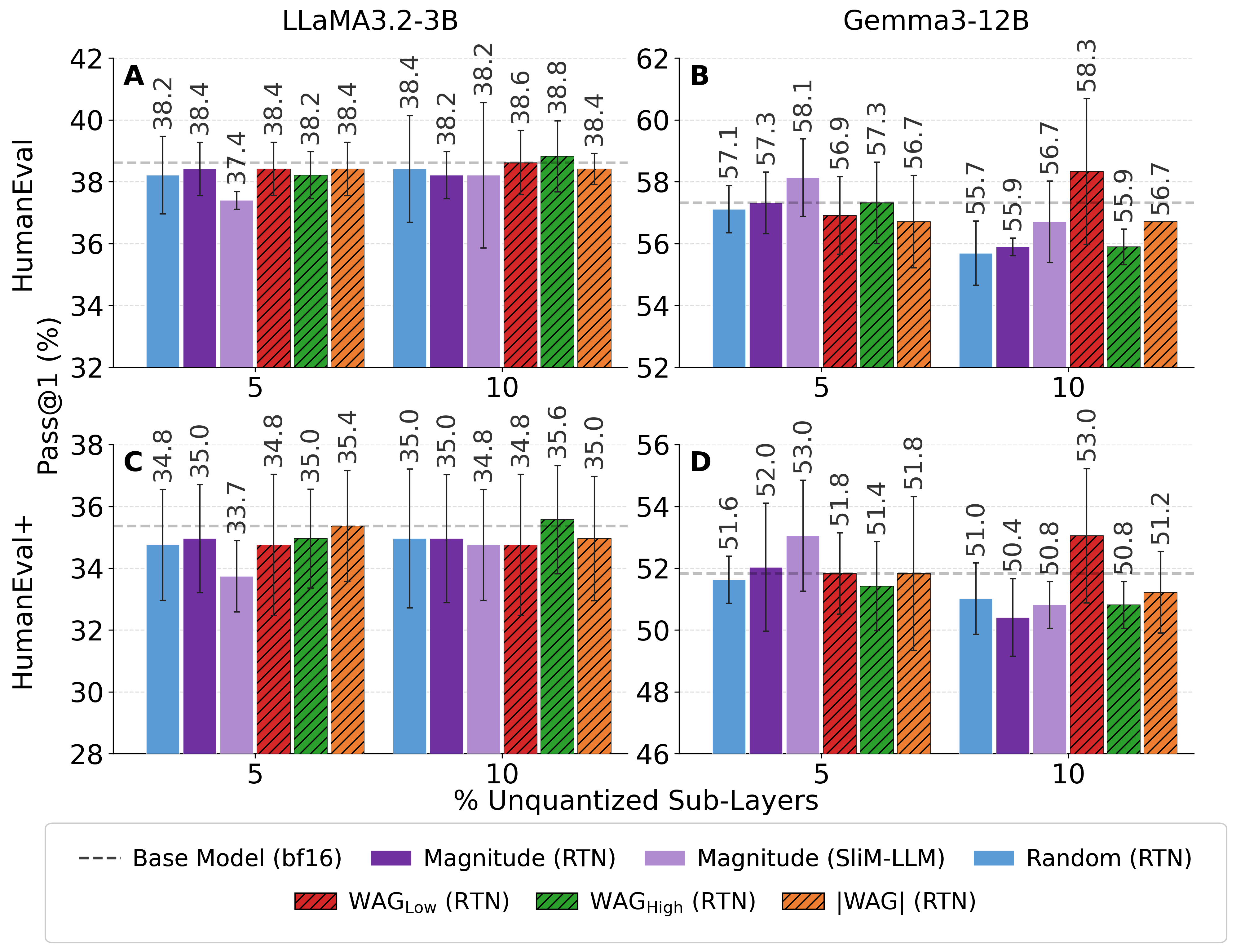}
    \caption{ Mixed-precision quantized model performance evaluated via \textit{Pass@1} accuracy (\%) on \textit{HumanEval} and \textit{HumanEval+} for LLaMA3.2-3B and Gemma3-12B quantized to FP8 (E4M3), with 5\% and 10\% of submodules retained in BFloat16. Submodules are selected by WAG$_\text{Low}$, WAG$_\text{High}$, or |WAG|, compared against random selection (averaged over 3 seeds) and parameter magnitude across SliM-LLM and RTN quantization methods. Dashed lines denote the unquantized BFloat16 base model performance. \textsc{WAG}-guided selection matches or exceeds random selection and can even surpass the full-precision baseline.}
    \label{fig:mixed-precision}
    \vspace{-5mm}
\end{figure*}

\textbf{Result.} \noindent
Figure~\ref{fig:mixed-precision} reports Pass@1 on \textit{HumanEval} and \textit{HumanEval+} for LLaMA3.2-3B and Gemma3-12B at preservation budgets of 5\% and 10\%, comparing \textsc{WAG}-guided submodule selection against random selection and the magnitude baselines (RTN and SliM-LLM). Averaged over both datasets, both models, and both budgets, all three \textsc{WAG} criteria exceed the best baseline, magnitude selection with SliM-LLM. WAG$_\text{Low}$ achieves a 1.1\% relative gain, followed by |\textsc{WAG}| at 0.2\% and WAG$_\text{High}$ at 0.1\%. The advantage is largest at the 10\% budget, where a \textsc{WAG} criterion is the top-performing method in every model-dataset setting and WAG$_\text{Low}$ improves over the best baseline by 2.3\% on average, driven by Gemma3-12B. At the tighter 5\% budget, the \textsc{WAG} criteria are on par with the strongest magnitude baseline, with differences within run-to-run variance. Overall, WAG$_\text{Low}$ and |\textsc{WAG}| are the most reliable criteria for mixed-precision quantization.

\subsection{Application D: \textsc{WAG} for Knowledge Editing}
\label{sec:application_ke}

We finally apply \textsc{WAG} to knowledge editing under the locate-then-edit paradigm, where the task is to inject a new fact into a trained model by modifying a small set of layers while leaving unrelated knowledge intact. The central difficulty is where to edit. An edit applied to the wrong layers either fails to take or induces unrelated behavior \cite{basani2026exposing}. The intuition is that the right target is the set of layers whose parameters are most sensitive to the specific fact being edited, and this is what \textsc{WAG} measures once its gradient is taken with respect to the new knowledge.

We evaluate with the R-ROME~\cite{gupta2024rebuilding} method on \textit{ZSRE} \cite{levy2017zero, wang2024easyediteasytouseknowledgeediting}, \textit{WikiCounterFact} \cite{zhang2024comprehensive}, \textit{WikiBio} \cite{zhang2024comprehensive, hartvigsen2023aging}, and \textit{WikiRecent} \cite{zhang2024comprehensive, cohen2024evaluating}, following the proxy-set protocol of recent study \cite{datta2026golden}. Overall, we use \textsc{WAG} to score candidate layers and edit the single most sensitive one. Because locate-then-edit methods target MLP sublayers~\cite{wang2024easyediteasytouseknowledgeediting}, we restrict scoring
to MLP layers. Following the layer-scoring scheme of Applications A and B, since $|\textsc{WAG}|$ captures sensitivity in both directions, we rank every parameter globally by using this magnitude and score each MLP layer by the mean rank of its parameters, so that layers are compared by how many knowledge-sensitive parameters they contain. The layer with the highest score is selected for editing. Selection is performed once on the proxy set, and all evaluation is on unseen test data. The full scoring procedure is given in Appendix \ref{app:application_ke}.

\begin{table}[t]
\centering
\caption{Knowledge editing performance using R-ROME for |WAG| and CMA across two models: {LLaMA3.2-3B} and {Gemma3-12B} on four datasets: \textit{ZSRE}, \textit{WikiCounterFact}, \textit{WikiBio}, and \textit{WikiRecent}. Different datasets support different performance metrics, where \textit{Rewrite Accuracy} $\rightarrow$ RwA, \textit{Rephrase Accuracy} $\rightarrow$ RpA, \textit{Locality} $\rightarrow$ LOC, \textit{Portability} $\rightarrow$ PRT, \textit{Fluency} $\rightarrow$ FLC, and \textit{Overall} $\rightarrow$ OV. Layers are identified using the proxy set for each method, and evaluation is conducted on unseen test data. Results demonstrate that WAG-selected layers achieve a mean overall performance gain of 3.38\% over CMA.}
\footnotesize
\begin{minipage}{0.50\textwidth}
\centering
\textit{ZSRE}
\medskip
{
\setlength{\tabcolsep}{1.04pt}
\resizebox{\linewidth}{!}{
\begin{tabular}{c | c | c c c c c}
\toprule
Model & Selection & RwA & RpA & LOC & PRT & OV\\
\midrule
\multirow{2}{*}{\shortstack[c]{LLaMA3.2-3B}}
& CMA & 0.9946 & 0.9595 & 0.9420 & 0.5383 & 0.8586 \\
& |WAG| & 0.9906 & 0.9483 & 0.9624 & 0.5433 & \textbf{0.8612} \\
\midrule
\multirow{2}{*}{Gemma3-12B}
& CMA & 0.8581 & 0.7041 & 0.9421 & 0.4989 & 0.7508 \\
& |WAG| & 0.9575 & 0.8150 & 0.9646 & 0.5028 & \textbf{0.8100} \\
\bottomrule
\end{tabular}
}
}
\end{minipage}%
\hfill
\begin{minipage}{0.50\textwidth}
\centering
\textit{WikiCounterFact}
\medskip
{
\setlength{\tabcolsep}{1.04pt}
\resizebox{\linewidth}{!}{
\begin{tabular}{c | c | c c c c c}
\toprule
Model & Selection & RwA & RpA & LOC & PRT & OV\\
\midrule
\multirow{2}{*}{\shortstack[c]{LLaMA3.2-3B}}
& CMA & 0.9954 & 0.3570 & 0.5583 & 0.8811 & 0.7019 \\
& |WAG| & 0.9550 & 0.3807 & 0.5736 & 0.8936 & \textbf{0.7038} \\
\midrule
\multirow{2}{*}{Gemma3-12B}
& CMA & 0.8276 & 0.5265 & 0.4473 & 0.4871 & 0.5757 \\
& |WAG| & 0.9290 & 0.5746 & 0.4771 & 0.4871 & \textbf{0.6209} \\
\bottomrule
\end{tabular}
}
}
\end{minipage}%
\vspace{-1.5mm}
\begin{minipage}{0.50\textwidth}
\centering
\textit{WikiBio}
\medskip
{
\setlength{\tabcolsep}{3.2pt}
\resizebox{\linewidth}{!}{
\begin{tabular}{c | c | c c c c}
\toprule
Model & Selection & RwA & LOC & FLC & OV\\
\midrule
\multirow{2}{*}{\shortstack[c]{LLaMA3.2-3B}}
& CMA & 0.8904 & 0.6625 & 0.8889 & \textbf{0.8139} \\
& |WAG| & 0.7898 & 0.7351 & 0.8853 & 0.8034 \\
\midrule
\multirow{2}{*}{Gemma3-12B}
& CMA & 0.6850 & 0.6566 & 0.6703 & 0.6706 \\
& |WAG| & 0.8153 & 0.8062 & 0.6670 & \textbf{0.7628} \\
\bottomrule
\end{tabular}
}
}
\end{minipage}%
\hfill
\begin{minipage}{0.50\textwidth}
\centering
\textit{WikiRecent}
\medskip
{
\setlength{\tabcolsep}{1.04pt}
\resizebox{\linewidth}{!}{
\begin{tabular}{c | c | c c c c c}
\toprule
Model & Selection & RwA & LOC & PRT & FLC & OV\\
\midrule
\multirow{2}{*}{\shortstack[c]{LLaMA3.2-3B}}
& CMA & 0.9844 & 0.4976 & 0.5362 & 0.8994 & \textbf{0.7294} \\
& |WAG| & 0.9844 & 0.4976 & 0.5362 & 0.8994 & \textbf{0.7294} \\
\midrule
\multirow{2}{*}{Gemma3-12B}
& CMA & 0.9710 & 0.5796 & 0.5216 & 0.5020 & 0.6436 \\
& |WAG| & 0.9775 & 0.5779 & 0.5327 & 0.5013 & \textbf{0.6474} \\
\bottomrule
\end{tabular}
}
}
\end{minipage}%
\vspace{2mm}
\vspace{2mm}
\label{tab:ke}\vspace{-8mm}
\end{table}

\textbf{Result.} \noindent
Table~\ref{tab:ke} compares |WAG| selected layers against CMA across four datasets (\textit{ZSRE}, \textit{WikiCounterFact}, \textit{WikiBio}, \textit{WikiRecent}) and two models: LLaMA3.2-3B and Gemma3-12B. Averaged across all models and datasets, |WAG| improves the \textit{Overall} score by 3.38\% relative to CMA. The gains are most pronounced for \textit{Locality} and \textit{Portability}, where |WAG| matches or exceeds CMA in every setting, indicating that \textsc{WAG}-selected layers localize edits more precisely without disturbing unrelated knowledge. In sum, |WAG| serves as an effective layer-selection mechanism for locate-then-edit knowledge editing, attaining higher overall editing performance than CMA, with particularly large gains on the larger Gemma3-12B model.

\section{Conclusion}
In this paper, we studied the problem of parameter loss sensitivity in LLMs, which seeks to identify the parameters that most influence model behavior. Despite the existing efforts to estimate the parameter importance or parameter sensitivity using magnitudes, first-order gradient signals, and expensive second-order approximations, existing approaches do not explore the underlying relationship between weights and gradient information. Specifically, this unexplored relationship identifies a tiny subset of parameters that have a disproportionate impact on model performance, leading to failure modes like abrupt performance collapse. Such a failure mode is still overlooked by existing studies. We propose this interaction as Weight-Adjusted Gradients (\textsc{WAG}), which, without requiring expensive computation, captures the sensitivity of loss to the multiplicative parameter perturbation. We further provide theoretical interpretations of this relationship captured by \textsc{WAG} and empirically demonstrate that masking a tiny number of parameters identified by WAG induces rapid performance degradation. Through several experiments across diverse practical applications, we demonstrate the effectiveness and applicability of the parameter sensitivity captured by \textsc{WAG}.

\section*{Acknowledgments}
We gratefully acknowledge the support of the Google TPU Builders Program, which provided support and access to computational resources, and Google Tunix library for post-training functionality and flexibility to enable this work.

\bibliographystyle{plainnat}
\bibliography{refs}

\clearpage
\appendix
\section*{Appendix}
In the Appendix, we provide the proofs of the theoretical analysis in Appendix \ref{app:proofs}. In Appendix \ref{app:perf_metrics}, we provide details of the performance metrics used in the experiments. Moreover, Appendix \ref{app:exp_setup} discusses the implementation details of the experiments. Finally, we provide configuration details for reproducibility of the experiments in Appendix \ref{app:code}.

\section{Proofs of Theoretical Analysis}
\label{app:proofs}

We restate each theorem from Subsection~\ref{sec:theoretical_analysis} and provide its full proof. Each restated statement carries the same numbering as in the main text. Throughout, $\ell(z;\theta)$ denotes the per-example loss of a model with parameter vector $\theta = (\theta_1,\ldots,\theta_d) \in \mathbb{R}^d$ evaluated at example $z$, and the \textsc{WAG} of coordinate $i$ is defined as
\begin{equation*}
\mathrm{WAG}_i \;=\; -\,\theta_i \frac{\partial \ell(z;\theta)}{\partial \theta_i}.
\end{equation*}
All perturbations are evaluated to first order, with higher-order terms collected in
$\mathcal{O}(\cdot)$.

\subsection{Proof of Theorem~\ref{thm:log_gradient}}
\logGradient*
\begin{proof}
Since $u_i = \log |\theta_i|,$ we have $\theta_i = \operatorname{sign}(\theta_i)e^{u_i}.$ Applying the chain rule gives:
\begin{equation*}
\frac{\partial \ell}{\partial u_i}
=
\frac{\partial \ell}{\partial \theta_i}
\frac{\partial \theta_i}{\partial u_i}.
\end{equation*}
Moreover, since $\frac{\partial \theta_i}{\partial u_i} = \theta_i$, it follows that:
\begin{equation*}
\frac{\partial \ell}{\partial u_i}
=
\theta_i
\frac{\partial \ell}{\partial \theta_i}.
\end{equation*}
And using the definition of WAG yields:
\begin{equation*}
\mathrm{WAG}_i
=
-
\frac{\partial \ell}{\partial u_i}. \qedhere
\end{equation*}
\end{proof}

\subsection{Proof of Theorem~\ref{thm:multiplicative}}
\multiplicativeSensitivity*
\begin{proof}
Using first-order Taylor expansion around $\theta$,
\begin{equation*}
\ell(z;\theta')
=
\ell(z;\theta)
+
\sum_{i=1}^d
\frac{\partial \ell(z;\theta)}{\partial \theta_i}
(\theta_i' - \theta_i)
+
\mathcal{O}(\|\theta'-\theta\|^2).
\end{equation*}
As $\theta_i' - \theta_i = \epsilon_i \cdot \theta_i$, we obtain:
\begin{equation*}
\ell(z;\theta') - \ell(z;\theta)
=
\sum_{i=1}^d
\epsilon_i
\theta_i
\frac{\partial \ell(z;\theta)}{\partial \theta_i}
+
\mathcal{O}(\|\epsilon\|^2).
\end{equation*}
Substituting the definition of WAG ties up the proof:
\begin{equation*}
\ell(z;\theta') - \ell(z;\theta)
=
-
\sum_{i=1}^d
\epsilon_i \mathrm{WAG}_i
+
\mathcal{O}(\|\epsilon\|^2). \qedhere
\end{equation*}
\end{proof}

\subsection{Proof of Theorem~\ref{thm:scale_balance}}
\scaleBalance*
\begin{proof}
By Theorem~\ref{thm:multiplicative},
\begin{equation*}
\ell(z;\theta') - \ell(z;\theta) = -\langle \epsilon,w\rangle + \mathcal{O}(\|\epsilon\|_2^2).
\end{equation*}
Fix a perturbation budget $\|\epsilon\|_2=r$. Applying the Cauchy--Schwarz inequality yields:
\begin{equation*}
-\langle \epsilon,w\rangle \le \|\epsilon\|_2\|w\|_2 = r\|w\|_2.
\end{equation*}
Equality holds precisely when $\epsilon$ is anti-parallel to $w$, namely:
\begin{equation*}
\epsilon^\star = -\frac{r\,w}{\|w\|_2}.
\end{equation*}
Substituting this perturbation into the first-order expansion gives us:
\begin{equation*}
\ell(z;\theta') - \ell(z;\theta) = r\|w\|_2 + \mathcal{O}(r^2).
\end{equation*}
\looseness-1Therefore, no multiplicative perturbation with the same norm can produce a larger first-order increase in loss. Since Theorem~\ref{thm:log_gradient} showed that WAG is the gradient in logarithmic parameter coordinates, it follows that the WAG vector defines the direction of greatest local sensitivity in log-parameter space.
\end{proof}

\section{Performance Metrics}
\label{app:perf_metrics}
Here we give the details of the metrics used to evaluate the results of the experiments involving the applications of \textsc{WAG} presented in this paper.

\subsection{Empirical Analysis \& Application C}
We adopt the evaluation protocols of OLMES \cite{gu2025olmesstandardlanguagemodel} and LM Evaluation Harness \cite{eval-harness} for coding and mathematical reasoning tasks, respectively.

\textbf{\textit{Pass@1}:} \noindent \textit{Pass@1} is used to evaluate code generation performance on the \textit{HumanEval} and \textit{HumanEval+} benchmarks. It measures the proportion of tasks for which the model's greedy generated solution passes all corresponding unit tests.

\textbf{\textit{Exact Match}:} \noindent \textit{Exact Match} is used to evaluate mathematical reasoning performance on the \textit{GSM8K} benchmark. It measures the proportion of problems for which the model's final predicted answer for the specific mathematical task exactly matches the ground-truth answer.

\subsection{Application B}
We adopt the evaluation of the unlearning task from the \textit{TOFU} benchmark \cite{maini2024tofu}. Each QA example is probed with its ground-truth answer, a \emph{paraphrased} (semantically equivalent) answer, and a set of \emph{perturbed} (factually wrong) answers. For a target answer with length-normalized negative log-likelihood (per-token cross-entropy) $\ell$ under the model, its length-normalized probability is $p=\exp(-\ell)$. The benchmark partitions data into a \textit{Forget} set, a \textit{Retain} set, and two general-knowledge probes (\textit{Real Authors}, \textit{Real World}).

\textbf{\textit{Model Utility}:} \noindent The overall linguistic and retain capability preserved after unlearning, computed as the harmonic mean of nine non-forget metrics: \textit{Probability, ROUGE, Truth Ratio} for each \textit{Retain, Real Authors, Real World} type of knowledge probes. For each task, \emph{Probability} is the mean ground-truth answer probability (normalized against the perturbed answers for the Real Authors / Real World probes); \emph{ROUGE} is the ROUGE-L recall
(with stemming) between the model's greedily generated answer and the ground truth; and \emph{Truth Ratio} rewards preferring the correct answer over perturbations.

\textbf{\textit{Forget Truth Ratio (TR)}:} \noindent Let $p_{\text{para}}=\exp(-\ell_{\text{para}})$ be the probability of the paraphrased answer and $\bar p_{\text{pert}}$ the mean probability over
the perturbed answers, and define $R=\bar p_{\text{pert}}/p_{\text{para}}$. On the Forget set we report $\frac{1}{N}\sum \min(R,\,1/R)\in(0,1]$. A higher value means the perturbed and correct answers are about equally (un)likely, indicating the targeted fact has been forgotten.

\textbf{\textit{KS Forget}:} \noindent We form the empirical distribution of per-example forget truth ratios for the unlearned model with respect to the FFT reference model, and run a two-sample Kolmogorov--Smirnov test; \textit{KS Forget} is the resulting KS statistic $D$ (the supremum distance between the two empirical Cumulative Distribution Functions (CDFs)). In our setup, the reference is the fine-tuned model \emph{before} unlearning (FFT), so a larger $D$ indicates that the unlearned model's forget-set behavior has diverged further from the pre-unlearning model i.e., stronger forgetting. As the FFT model itself is the reference model, the metric is not reported for the FFT model.

\subsection{Application D}
We evaluate knowledge editing performance using five metrics provided by the EasyEdit framework \cite{wang2024easyediteasytouseknowledgeediting}: \textit{Rewrite Accuracy}, \textit{Rephrase Accuracy}, \textit{Locality}, \textit{Portability}, and \textit{Fluency}. Since not all datasets support every evaluation metric \cite{wang2024easyediteasytouseknowledgeediting, zhang2024comprehensive}, results are reported only for the metrics applicable to each dataset. Higher values indicate better editing performance for all metrics. We additionally report an \textit{Overall} score, computed as the average of the normalized metric values.

\textbf{\textit{Rewrite Accuracy}:} \textit{Rewrite Accuracy} measures whether the edited knowledge is correctly retrieved when responding to the original query targeted by the edit.

\textbf{\textit{Rephrase Accuracy}:} \textit{Rephrase Accuracy} evaluates whether the model can recall the edited knowledge when presented with semantically equivalent rephrasings of the original query.

\textbf{\textit{Locality}:} \textit{Locality} measures the extent to which the edit preserves the model's behavior on unrelated queries, thereby quantifying unintended side effects introduced by the edit \cite{meng2022locatingRome, wang2024easyediteasytouseknowledgeediting, yao2023editinglargelanguagemodels, zhang2024comprehensive}. 

\textbf{\textit{Portability}:} \textit{Portability} assesses the model's ability to transfer the edited knowledge to related downstream reasoning tasks \cite{yao2023editinglargelanguagemodels}.

\textbf{\textit{Fluency}:} \textit{Fluency} evaluates the linguistic quality and naturalness of the generated responses.

\section{Experiment Setup and Implementation Details}
\label{app:exp_setup}
In this section, we will discuss the experiment setup and the implementation details of the experiments conducted to demonstrate the practical applicability of \textsc{WAG}.

\subsection{Application A: \textsc{WAG} for Mixture-of-Experts Allocation}
\label{app:expert_allocation}


We first leverage \textsc{WAG} to determine the optimal allocation of a fixed budget of $N$ parallel experts across layers. We compare our method against four baselines: AlphaLoRA~\cite{qing2024alphalora} and three fixed MoLA configurations~\cite{gao2025mola}, triangle (2 4 6 8), rectangle (5 5 5 5), and hourglass (8 2 2 8). The total expert budget is set to $N = 140$ for LLaMA3.2-3B and $N = 240$ for Gemma3-12B, and the router selects at most the top-$2$ experts per layer. We evaluate the zero-shot performance of the models with the allocated experts on three datasets: \textit{MRPC} \cite{dolan2005automatically}, \textit{CoLA} \cite{wang2018glue}, and \textit{ScienceQA} \cite{lu2022learn}.

Our allocation principle is to assign more experts to layers whose parameters are highly loss-sensitive to perturbation. Since such layers have the greatest impact on model performance, they should receive a larger share of experts to adequately capture their complexity. We therefore report the expert-allocation performance for sensitive layers identified by three \textsc{WAG}-based criteria: WAG$_\text{Low}$, WAG$_\text{High}$, and |WAG|. To assess how many important parameters each layer contains, treating every parameter as equally weighted in the comparison, we compute a global rank for each parameter according to its \textsc{WAG} value. Formally, let $w_p$ denote the \textsc{WAG}-based value of parameter $p$ and $\mathrm{rank}(w_p)$ its global rank among all parameters in the model, where a higher rank is assigned to parameters deemed more sensitive based on \textsc{WAG} criteria. $\mathcal{P}_\ell$ denotes the set of parameters in layer $\ell$. The score of layer $\ell$ is then defined as the mean rank of its parameters:
$
    v_i = \frac{1}{|\mathcal{P}_i|} \sum_{p \in \mathcal{P}_i}
    \mathrm{rank}(q_p),
    \label{eq:layer-score}
$
Given the layer scores $\{v_i\}_{i=1}^{m}$, where $m$ is the number of layers,
we convert them into integer expert counts following an approach similar to
LayerIF~\cite{askari2026layerif} and AlphaLoRA~\cite{qing2024alphalora}. We apply a power transformation to the scores to modulate the sharpness of the allocation. First, we normalize the (inverted) scores to the interval $[0, 1]$ via
min-max normalization:
$
    \tilde{v}_i = \frac{v_i - \min_j v_j}{\max_j v_j - \min_j v_j}.
    \label{eq:normalization}
$
We then raise the normalized scores to a hyperparameter-controlled exponent $\beta > 0$:
$
    \hat{v}_i = \tilde{v}_i^{\,\beta},
    \label{eq:power-transform}
$
, where $\beta$ controls the dispersion of the allocation, larger values of
$\beta$ amplify disparities between layers. Next, we scale the transformed scores so that the total allocation, excluding
a baseline of one expert per layer, sums to the target budget $T$:
$
    f_i = \frac{\hat{v}_i}{\sum_{j=1}^{m} \hat{v}_j} \cdot (T - m).
    \label{eq:fractional-allocation}
$
Each layer's preliminary allocation is obtained by flooring the fractional
allocation and adding the baseline expert:
$
    s_i = \lfloor f_i \rfloor + 1,
    \label{eq:floor-allocation}
$
which guarantees $s_i \geq 1$ for all layers. Due to flooring, the total
$\sum_i s_i$ may fall short of $T$. We therefore compute the residual budget
$r = T - \sum_{i} s_i$ and distribute one additional expert to each of the $r$
layers with the largest fractional remainders $f_i - \lfloor f_i \rfloor$:
$
    s_i \leftarrow s_i + 1
    \quad \text{for the top-}r \text{ layers ranked by }
    \bigl(f_i - \lfloor f_i \rfloor\bigr).
    \label{eq:remainder-allocation}
$
This procedure yields the final allocation vector
$S = [s_1, s_2, \ldots, s_m]$ satisfying $\sum_{i=1}^{m} s_i = T$ and
$s_i \geq 1$ for all $i$.

\subsection{Application B: \textsc{WAG} for Targeted Unlearning}
\label{app:application_unlearning}
In this subsection, we utilize \textsc{WAG} to make unlearning more effective and more efficient. We focus on improving Gradient Difference (GradDiff) \cite{liu2022continual} by applying it selectively on top of a \textsc{WAG}-guided layer selection. Rather than following the standard GradDiff procedure of updating all model parameters during unlearning, we restrict the updates to a targeted subset comprising 75\% of the layers identified by \textsc{WAG}, while keeping the parameters of the remaining layers frozen. We conduct all experiments and evaluations on the \textit{TOFU} benchmark~\cite{maini2024tofu}, specifically on the \textit{10\% forget} dataset.

The intuition behind targeted unlearning is as follows: if unlearning is applied only to the layers containing the parameters most important with respect to the forget set, we can simultaneously (i) improve unlearning quality, since the updates concentrate on the parameters that encode the information to be removed, and (ii) reduce computational cost, since only a fraction of the parameters needs to be updated. Analogous to our expert allocation procedure in Application A \ref{sec:expert_allocation}, we identify the target layers as those containing the largest number of important parameters according to their mean rank of parameters following \textsc{WAG}-based score criteria: the lowest \textsc{WAG} values (WAG$_\text{Low}$), the highest \textsc{WAG} values (WAG$_\text{High}$), and the largest-magnitude \textsc{WAG} values ($|\textsc{WAG}|$). Formally, let $v_{\ell}$ denote the score of layer $\ell$, where the underlying \textsc{WAG} values are computed with respect to the forget set $\mathcal{D}_f$. Given a selection ratio $\rho \in (0, 1]$ (we use $\rho = 0.75$), we select the set of target
layers
$
    \mathcal{L}_{\rho} = \operatorname*{arg\,top\text{-}k}_{\ell \in \{1, \ldots, m\}} \;v_{\ell},
    k = \lceil \rho \, m \rceil,
$
i.e., the $\lceil \rho m \rceil$ layers with the highest scores. During unlearning, GradDiff updates are applied only to parameters $\theta_p$ with $p \in \mathcal{P}_i$ for $i \in \mathcal{L}_{\rho}$, while all remaining parameters are kept frozen.

\subsection{Application C: \textsc{WAG} for Mixed Precision Quantization}
\label{app:application_quantization}
We now apply \textsc{WAG} to mixed-precision quantization, where the goal is to retain the most sensitive transformer submodules in full precision while quantizing the remaining components. Specifically, we look at sub-layer granularity where we consider the attention projection matrices (query, key, value, and output projections) and the feed-forward network projection layers (gating, expansion, and contraction projections). Following the experimental setup of Sec.~\ref{sec:empirical}, we quantize a BFloat16 model trained on coding tasks to FP8 (E4M3), retaining 5\% and 10\% of the considered submodules in full precision. We then compare the resulting mixed-precision models when the preserved submodules are selected using \textsc{WAG} against random selection and magnitude selection, which is also studied in PB-LLM \cite{shang2024pbllm}. We adopt the magnitude baseline in both SliM-LLM \cite{huang2025slimllm} and RTN, where the others are quantized using RTN.

Since \textsc{WAG} scores each parameter according to its local sensitivity under multiplicative perturbations, the collective sensitivity of a submodule with respect to the test set can be estimated by averaging the scores of its parameters. Formally, let $w_p$ denote the \textsc{WAG} value of parameter $p$, where gradients are computed on the test set, and let $\mathcal{P}_u$ denote the set of parameters in submodule $u$. We define two submodule-level scores:
\begin{equation*}
    v_u^{\mathrm{mean}}
        = \frac{1}{|\mathcal{P}_u|} \sum_{p \in \mathcal{P}_u} w_p, \qquad
    v_u^{\mathrm{abs}}
        = \frac{1}{|\mathcal{P}_u|} \sum_{p \in \mathcal{P}_u} |w_p|.
    \label{eq:quant-submodule-scores}
\end{equation*}
WAG$_\text{Low}$ and WAG$_\text{High}$ preserve the submodules with the
lowest and highest $v_u^{\mathrm{mean}}$, respectively, while |WAG|
preserves those with the highest $v_u^{\mathrm{abs}}$, capturing parameters
that are highly sensitive in either direction. Given a preservation budget
$\kappa \in \{10\%, 20\%, 30\%\}$, we keep the top-$\kappa$ fraction of
submodules ranked by the chosen criterion in BFloat16 and quantize all
remaining submodules to FP8.

\subsection{Application D: \textsc{WAG} for Knowledge Editing}
\label{app:application_ke}
We next consider the locate-then-edit paradigm of knowledge editing, where we use \textsc{WAG} to identify the layers most relevant for editing. The intuition is that if we select the layers containing the largest number of parameters important to the new knowledge to be edited, the edit can be applied most effectively while minimally affecting unrelated knowledge. We evaluate this using the R-ROME \cite{gupta2024rebuilding} editing method on four datasets: \textit{ZSRE} \cite{levy2017zero, wang2024easyediteasytouseknowledgeediting}, \textit{WikiCounterFact} \cite{zhang2024comprehensive}, \textit{WikiBio} \cite{zhang2024comprehensive, hartvigsen2023aging}, and \textit{WikiRecent} \cite{zhang2024comprehensive, cohen2024evaluating}. We follow the evaluation framework adopted in the recent work \cite{datta2026golden}, where we select the layers based on a small proxy set of 10\% of the samples and evaluate the results on the test set consisting of the rest of the samples.

Since locate-then-edit methods \cite{wang2024easyediteasytouseknowledgeediting} target MLP layers, we restrict the selection to MLP layers and score them using \textsc{WAG}. In the knowledge editing application,\textsc{WAG} should reflect the sensitivity of the parameters to the specific knowledge being edited. Let an edit sample be a query-target pair $(x, y^{\ast})$, where $x$ is the prompt and $y^{\ast} = (y^{\ast}_1, \ldots, y^{\ast}_K)$ is the token sequence encoding the new knowledge. We define the gradient component of \textsc{WAG} via the autoregressive negative log-likelihood of the new knowledge given the query:
\begin{equation*}
    \mathcal{L}_{\mathrm{edit}}(\theta; x, y^{\ast})
    = -\sum_{k=1}^{K}
      \log p_{\theta}\!\left(y^{\ast}_{k} \mid x, \, y^{\ast}_{<k}\right),
    \label{eq:edit-loss}
\end{equation*}
so that the \textsc{WAG} value of parameter $p$, denoted $w_p$, is computed
with respect to $\nabla_{\theta_p} \mathcal{L}_{\mathrm{edit}}$, averaged
over the edit samples of the proxy set.

Since we aim to compare layers by the \emph{number} of important parameters they contain, weighting each parameter uniformly, we rank parameters globally. Because |WAG| captures sensitivity in both directions, each parameter is ranked according to the magnitude of its score, $r_p = \mathrm{rank}(|w_p|)$, where $\mathrm{rank}(\cdot)$ assigns rank $1$ to the lowest value. The score of an MLP layer $\ell$ is then the mean rank of its parameters:
$
    v_{\ell} = \frac{1}{|\mathcal{P}_{\ell}|}
    \sum_{p \in \mathcal{P}_{\ell}} r_p,
$
where $\mathcal{P}_{\ell}$ is the set of parameters in the MLP sublayer of
layer $\ell$, and the layer with the highest $v_{\ell}$, the highest concentration of important parameters is selected for editing. Layers are
identified once using the proxy set, and all evaluations are conducted on unseen test data.

\section{Code and Reproducibility}\label{app:code}
All the experiments were conducted on a Linux server with 6x NVIDIA DGX B200 GPUs with 192 GB VRAM/GPU. All the utilized LLMs are original (unquantized) open-source versions from HuggingFace. During generation, we used greedy decoding without sampling. To ensure reproducibility, all sources of randomness, including PyTorch, NumPy, and Python’s random module, were fixed using a seed of 42. For the experiments involving more than one run, we report the mean and SEM of a total of 3 runs with seeds 42, 43, and 44, respectively. For experiments of empirical analysis and Application C, we built on the MFT codebase \cite{zhang2026boostinglargelanguagemodels} and evaluations we conducted following their convention using OLMES \cite{gu2025olmesstandardlanguagemodel}. For Applications A and B, we followed the training and evaluation conventions of LayerIF \cite{askari2026layerif} and TOFU \cite{maini2024tofu} codebases, respectively. Our codebase for Application D was built upon the \textit{EasyEdit} \cite{wang2024easyediteasytouseknowledgeediting} framework and LGA codebase \cite{datta2026golden}, and evaluations of the edited models were performed following their standard conventions.

\end{document}